\newcommand{\ie}{{\it i.e.}}
\newcommand{\etal}{{\it et al.}\xspace}
\newcommand{\std}[1]{\tiny{$\pm$#1}}
\newcommand{\cell}[2][1]{\makecell{#1 \\ \std{#2}}}
\newcommand{\bcell}[2][1]{\makecell{\textbf{#1} \\ \textbf{\std{#2}}}}
\newcommand{\ocell}[2][1]{#1\std{#2}}
\newcommand{\obcell}[2][1]{\textbf{#1\std{#2}}}
\title{WaveBound: Dynamic Error Bounds for Stable Time Series Forecasting}
\newcommand*{\affaddr}[1]{#1}
\newcommand*{\email}[1]{\texttt{#1}}
\author{
Youngin Cho*\quad Daejin Kim*\quad Dongmin Kim\quad Mohammad Azam Khan\quad Jaegul Choo\\
\affaddr{KAIST AI}\\
\small\email{\{choyi0521,kiddj,tommy.dm.kim,azamkhan,jchoo\}@kaist.ac.kr}
}
\begin{document}

\maketitle
\def\thefootnote{*}\footnotetext{Equal contribution}\def\thefootnote{\arabic{footnote}}

\begin{abstract}

Time series forecasting has become a critical task due to its high practicality in real-world applications such as traffic, energy consumption, economics and finance, and disease analysis.
Recent deep-learning-based approaches have shown remarkable success in time series forecasting. Nonetheless, due to the dynamics of time series data, deep networks still suffer from unstable training and overfitting.
Inconsistent patterns appearing in real-world data lead the model to be biased to a particular pattern, thus limiting the generalization. In this work, we introduce the dynamic error bounds on training loss to address the overfitting issue in time series forecasting. Consequently, we propose a regularization method called \emph{WaveBound} which estimates the adequate error bounds of training loss for each time step and feature at each iteration. By allowing the model to focus less on unpredictable data, WaveBound stabilizes the training process, thus significantly improving generalization. With the extensive experiments, we show that \emph{WaveBound} consistently improves upon the existing models in large margins, including the state-of-the-art model.

\end{abstract}
\section{Introduction}

Time series forecasting has gained a lot of attention due to its high practicality in real-world applications such as traffic~\cite{LiYS018}, energy consumption~\cite{PangYZZXT18}, economics and finance~\cite{DingZLD15}, and disease analysis~\cite{MatsubaraSPF14}. Recent deep-learning-based approaches, particularly transformer-based methods~\cite{WuXWL21, liu2022pyraformer, ZhouZPZLXZ21, LiJXZCWY19, KitaevKL20}, have shown remarkable success in time series forecasting.
Nevertheless, inconsistent patterns and unpredictable behaviors in real data enforce the models to fit in patterns, even for the cases of \emph{unpredictable} incident, and induce the unstable training.
In unpredictable cases, the model does not neglect them in training, but rather receives a huge penalty (\ie, training loss).
Ideally, small magnitudes of training loss should be presented for unpredictable patterns. This implies the need for proper regularization of the forecasting models in time series forecasting.

Recently, Ishida~\etal~\cite{IshidaYS0S20} claimed that zero training loss introduces a high bias in training, hence leading to an overconfident model and a decrease in generalization. To remedy this issue, they propose a simple regularization called \emph{flooding} and explicitly prevent the training loss from decreasing below a small constant threshold called the \emph{flood level}.
In this work, we also focus on the drawbacks of \emph{zero training loss} in time series forecasting. In time series forecasting, the model is enforced to fit to an inevitably appearing unpredictable pattern which mostly generates a tremendous error.
However, the original flooding is not applicable to time series forecasting mainly due to the following two main reasons. (i) Unlike image classification, time series forecasting requires the vector output of the size of the prediction length times the number of features. In this case, the original flooding considers the average training loss without dealing with each time step and feature individually. (ii) In time series data, error bounds should be dynamically changed for different patterns. Intuitively, a higher error should be tolerated for unpredictable patterns.

To properly address the overfitting issue in time series forecasting, the difficulty of prediction, \ie, how unpredictable the current label is, should be measured in the training procedure.
To this end, we introduce the \emph{target network} updated with an exponential moving average of the original network, \ie, \emph{source network}. At each iteration, the target network can guide a reasonable level of training loss to the source network --- the larger the error of the target network, the more unpredictable the pattern.
In current studies, a slow-moving average target network is commonly used to produce stable targets in the self-supervised setting~\cite{GrillSATRBDPGAP20, He0WXG20}.
By using the training loss of the target network for our lower bound, we derive a novel regularization method called \emph{WaveBound} which faithfully estimates the error bounds for each time step and feature. By dynamically adjusting the error bounds, our regularization prevents the model from overly fitting to a certain pattern and further improves generalization.
Figure~\ref{fig:main_concept} shows the conceptual difference between the original flooding and our WaveBound method. The originally proposed flooding determines the direction of the update step for all points by comparing the average loss and its flood level. In contrast, WaveBound individually decides the direction of the update step for each point by using the dynamic error bound of the training loss. The difference between these methods is further discussed in Section~\ref{sec:method}. Our main contributions are threefold:
\begin{itemize}[noitemsep, leftmargin=0.25in,topsep=0pt]
    \item We propose a simple yet effective regularization method called \emph{WaveBound} that dynamically provides the error bounds of training loss in time series forecasting.
    \item We show that our proposed regularization method consistently improves upon the existing state-of-the-art time series forecasting model on six real-world benchmarks.
    \item By conducting extensive experiments, we verify the significance of adjusting the error bounds for each time step, feature, and pattern, thus addressing the overfitting issue in time series forecasting. 
\end{itemize}

\begin{figure}[t!]
\begin{center}
    \includegraphics[width=1.0\linewidth]{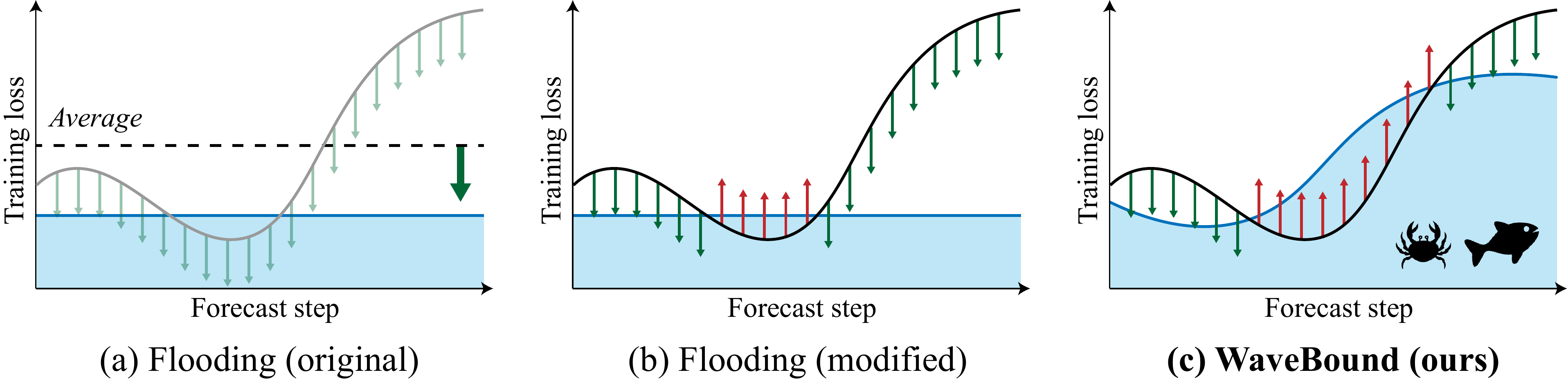}
\end{center}
    \caption{The conceptual examples for different methods. (a) The original flooding provides the lower bound of the average loss, rather than considering each time step and feature individually. (b) Even if the lower bounds of training loss are provided for each time step and feature, the bound of constant value cannot reflect the nature of time series forecasting. (c) Our proposed WaveBound method provides the lower bound of the training loss for each time step and feature. This lower bound is \emph{dynamically adjusted} to give a tighter error bound during the training process.}
\vspace{-0.2in}
\label{fig:main_concept}
\end{figure}
\section{Preliminary}

\subsection{Time Series Forecasting}

We consider the rolling forecasting setting with a fixed window size~\cite{WuXWL21, liu2022pyraformer, ZhouZPZLXZ21}. The aim of time series forecasting is to learn a forecaster $g:\mathbb{R}^{L\times K}\rightarrow\mathbb{R}^{M\times K}$ which predicts the future series $y^t=\{z_{t+1}, z_{t+2}, ..., z_{t+M} : z_i \in \mathbb{R}^K\}$ given the past series $x^t=\{z_{t-L+1}, z_{t-L+2}, ..., z_t : z_i \in \mathbb{R}^K\}$ at time $t$ where $K$ is the feature dimension and $L$ and $M$ are the input length and output length, respectively. 
We mainly address the error bounding in the multivariate regression problem where the input series $x$ and output series $y$ jointly come from the underlying density $p(x,y)$. 
For a given loss function $\ell$, the risk of $g$ is $R(g)\coloneqq\mathbb{E}_{(x,y)\sim p(x, y)}\left[\ell(g(x), y)\right]$.
Since we cannot directly access the distribution $p$, we instead minimize its empirical version $\hat{R}(g)\coloneqq \frac{1}{N}\sum_{i=1}^N \ell(g(x_i),y_i)$ using training data $\mathcal{X}\coloneqq \{(x_i, y_i)\}_{i=1}^{N}$. In the analysis, we assume that the errors are independent and identically distributed. We mainly consider using the mean squared error (MSE) loss, which is widely used as an objective function in recent time series forecasting models~\cite{WuXWL21, liu2022pyraformer, ZhouZPZLXZ21}. Then, the risk can be rewritten as the sum of the risk at each prediction step and feature:
\begin{equation}
\begin{aligned}
R(g) = &\mathbb{E}_{(u,v)\sim p(u, v)}\left[\frac{1}{MK}||g(u)-v||^2\right]= \frac{1}{MK}\sum_{j,k}R_{jk}(g), \\
\hat{R}(g) = &\frac{1}{NMK}\sum_{i=1}^N||g(x_i)-y_i||^2= \frac{1}{MK}\sum_{j,k}\hat{R}_{jk}(g),
\end{aligned}
\end{equation}
where $R_{jk}(g)\coloneqq \mathbb{E}_{(u,v)\sim p(u, v)}\left[||g_{jk}(u)-v_{jk}||^2\right]$ and $\hat{R}_{jk}(g)\coloneqq \frac{1}{N}\sum_{i=1}^N||g_{jk}(x_i)-(y_i)_{jk}||^2$.

\subsection{Flooding}

To address the overfitting problem, Ishida~\etal~\cite{IshidaYS0S20} suggested \emph{flooding}, which restricts the training loss to stay above a certain constant. Given the empirical risk $\hat{R}$ and the manually searched lower bound $b$, called the \emph{flood level}, we instead minimize the flooded empirical risk, which is defined as
\begin{equation}
\label{eq:original_flooding}
    \hat{R}^{fl}(g) = | \hat{R}(g) - b | + b.\footnote{The constant $b$ outside of absolute value brackets does not affect the gradient update, but make sure $\tilde{R}^{fl}(g)=\hat{R}(g)$ if $\hat{R}(g)>b$. This property is especially useful in the analysis of estimation error.}
\end{equation}
The gradient update of the flooded empirical risk with respect to the model parameters is performed as that of the empirical risk if $\hat{R}(g)>b$ and is otherwise performed in the opposite direction. The flooded empirical risk estimator is known to provide a better approximation of the risk than the empirical risk estimator in terms of MSE if the risk is greater than $b$.

For the mini-batched optimization, a gradient update of the flooded empirical risk is performed with respect to the mini-batch. Let $\hat{R}_t(g)$ denote the empirical risk with respect to the $t$-th mini-batch for $t\in\{1,2,...,T\}$. Then, by Jensen's inequality,
\begin{equation}
\label{eq:flooding_batch}
\hat{R}^{fl}(g) \leq \frac{1}{T} \sum_{t=1}^{T} (| \hat{R}_t(g) - b | + b).
\end{equation}
Therefore, the mini-batched optimization minimizes the upper bound of the flooded empirical risk.

\section{Method}
\label{sec:method}

In this section, we propose a novel regularization called WaveBound which is specially-designed for time series forecasting. We first deal with the drawbacks of applying original flooding to time series forecasting and then introduce a more desirable form of regularization.

\subsection{Flooding in Time Series Forecasting}

We first discuss how the original flooding may not effectively work for the time series forecasting problem. We start with rewriting Equation~(\ref{eq:original_flooding}) using the risks at each prediction step and feature:
\begin{equation}
    \hat{R}^{fl}(g) = \left| \hat{R}(g) - b \right| + b = \left|\left(\frac{1}{MK}\sum_{j,k}\hat{R}_{jk}(g)\right)-b\right|+b.
\end{equation}

Flooded empirical risk constrains the lower bound of the average of the empirical risk for all prediction steps and features by a constant value of $b$. 
However, for the multivariate regression model, this regularization does not independently bound each $\hat{R}_{jk}(g)$. As a result, the regularization effect is concentrated on output variables where $\hat{R}_{jk}(g)$ greatly varies in training.

In this circumstance, the modified version of flooding can be explored by considering the individual training loss for each time step and feature. This can be done by subtracting the flood level $b$ for each time step and feature as follows:
\begin{equation}
\label{eq:constant_flooding}
    \hat{R}^{const}(g) = \frac{1}{MK} \sum_{j,k} \Big( | \hat{R}_{jk}(g) - b | + b \Big).
\end{equation}
For the remainder of this study, we denote this version of flooding as \emph{constant flooding}.
Compared to the original flooding that considers the average of the whole training loss, constant flooding individually constrains the lower bound of the training loss at each time step and feature by the value of $b$.

Nonetheless, it still fails to consider different difficulties of predictions for different mini-batches. 
For constant flooding, it is challenging to properly minimize the variants of empirical risk in the batch-wise training process. As in Equation~\ref{eq:flooding_batch}, the mini-batched optimization minimizes the upper bound of the flooded empirical risk. The problem is that the inequality becomes less tight as each flooded risk term $\hat{R}_t(g)-b$ for $t \in \{1,2,...,T\}$ differs significantly. Since the time series data typically contains lots of unpredictable noise, this happens frequently as the loss of each batch highly varies. To ensure the tightness of the inequality, the bound for $\hat{R}_t(g)$ should be adaptively chosen for each batch.

\begin{figure}[t!]
\begin{center}
    \includegraphics[width=1.0\linewidth]{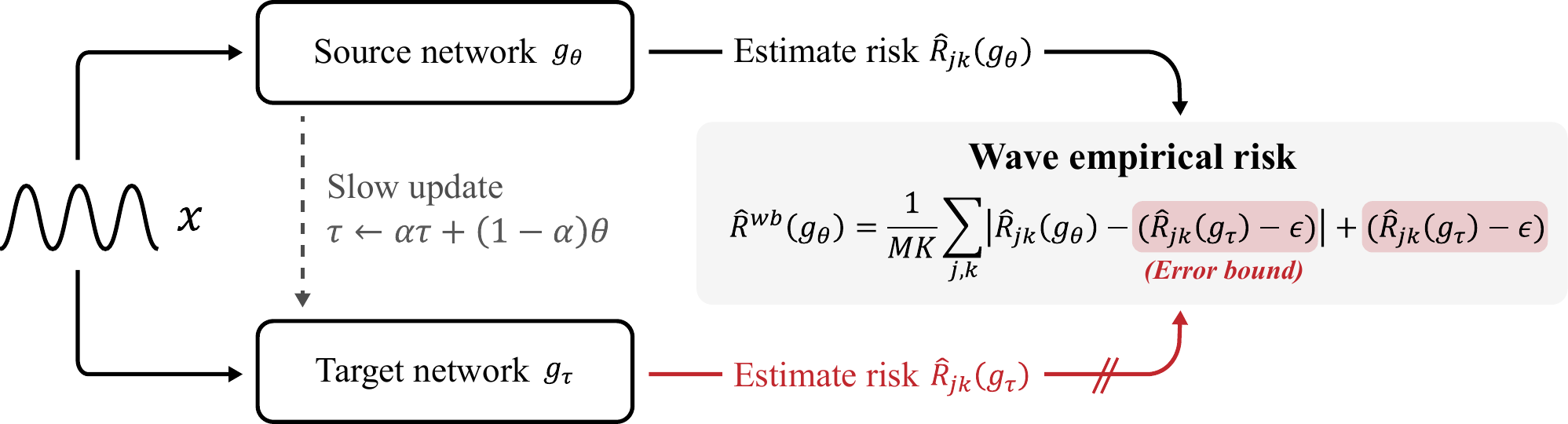}
\end{center}
\vspace{-0.1in}
    \caption{Our proposed WaveBound method provides the dynamic error bounds of the training loss for each time step and feature using the target network. The target network $g_\tau$ is updated with the EMA of the source network $g_\theta$. At $j$-th time step and $k$-th feature, the training loss is bounded by our estimated error bound $\hat{R}_{jk}(g_\tau) - \epsilon$, \ie, the gradient ascent is performed instead of the gradient descent when the training loss is below the error bound.}
\vspace{-0.1in}
\label{fig:method}
\end{figure}

\subsection{WaveBound}
As previously mentioned, to properly bound the empirical risk in time series forecasting, the regularization method should be considered with the following conditions:
(i) The regularization should consider the empirical risk for each time step and feature individually.
(ii) For different patterns, \ie, mini-batches, different error bounds should be searched in the batch-wise training process.
To handle this, we find the error bound for each time step and feature and dynamically adjust it at each iteration. Since manually searching different bounds for each time step and feature at every iteration is impractical, we estimate the error bounds for different predictions using the exponential moving average (EMA) model ~\cite{TarvainenV17}.

Concretely, two networks are employed throughout the training phase: the source network $g_{\theta}$ and target network $g_{\tau}$ which have the same architecture, but different weights $\theta$ and $\tau$, respectively. The target network estimates the proper lower bounds of errors for the predictions of the source network, and its weights are updated with the exponential moving average of the weights of the source network:
\begin{equation}
\tau \leftarrow \alpha\tau + (1-\alpha)\theta,
\end{equation}
where $\alpha\in\left[0,1\right]$ is a target decay rate. 
On the other hands, the source network updates their weights $\theta$ using the gradient descent update in the direction of the gradient of \emph{wave empirical risk} $\hat{R}^{wb}(g_\theta)$ which is defined as
\begin{equation}
\label{eq:wavebound}
\begin{aligned}
\hat{R}^{wb}(g_\theta) &= \frac{1}{MK} \sum_{j,k} \hat{R}^{wb}_{jk}(g_\theta), \\
\hat{R}^{wb}_{jk}(g_\theta) &= \left|\hat{R}_{jk}(g_\theta) - (\hat{R}_{jk}(g_\tau) - \epsilon) \right| + (\hat{R}_{jk}(g_\tau) - \epsilon),
\end{aligned}
\end{equation}
where $\epsilon$ is a hyperparameter indicating how far the error bound of the source network can be from the error of the target network. 
Intuitively, the target network guides the lower bound of the training loss for each time step and feature to prevent the source network from training towards a loss lower than that bound, \ie, overfitting to a certain pattern. As the exponential moving average model is known to have the effect of ensembling the source networks and memorizing training data visible in earlier iterations~\cite{TarvainenV17}, the target network can robustly estimate the error bound of the source network against the instability mostly caused by noisy input data. Figure~\ref{fig:method} shows how the source network and the target network perform in our WaveBound method. 
A summary of WaveBound is provided in Appendix~\ref{appendix:pseudocode}.

\textbf{Mini-batched optimization.} For $t\in\{1,2,...,T\}$, let $(\hat{R}^{wb}_t)_{jk}(g)$ and $(\hat{R}_t)_{jk}(g)$ denote the wave empirical risk and the empirical risk at $j$-th step and $k$-th feature relative to the $t$-th mini-batch, respectively. Given the target network $g^\ast$, by Jensen's inequality, 
\begin{equation}
\hat{R}^{wb}_{jk}(g) \leq 
\frac{1}{T} \sum_{t=1}^{T} \left(\left|(\hat{R}_t)_{jk}(g) - (\hat{R}_t)_{jk}(g^\ast) + \epsilon \right| + (\hat{R}_t)_{jk}(g^\ast) - \epsilon\right) = \frac{1}{T} \sum_{t=1}^{T}(\hat{R}^{wb}_t)_{jk}(g).
\end{equation}
Therefore, the mini-batched optimization minimizes the upper bound of wave empirical risk. 
Note that if $g$ is close to $g^\ast$, the values of $(\hat{R}_t)_{jk}(g) - (\hat{R}_t)_{jk}(g^\ast) + \epsilon$ are similar across mini-batches, which gives a tight bound in Jensen's inequality. We expect the EMA update to work so that this condition is met, giving a tight upper bound for the wave empirical risk in the mini-batched optimization.

\textbf{MSE reduction.}
We show that the MSE of our suggested wave empirical risk estimator can be smaller than that of the empirical risk estimator given an appropriate $\epsilon$.
\begin{restatable}{thm}{universe}
\label{thm:mse_reduction}
Fix measurable functions $g$ and $g^\ast$. Let $I\coloneqq\{(i,j):i=1,2,...,M, j=1,2,...,K\}$, and let $J(\mathcal{X})\coloneqq\{(i,j)\in I: \hat{R}_{ij}(g) < \hat{R}_{ij}(g^\ast) - \epsilon\}$. 
If the following two conditions hold:
\begin{enumerate}[label=(\alph*)]
\item $\forall (i,j),(k,l) \in I \: \text{such that} \: (i, j) \neq (k, l), \;\hat{R}_{ij}(g)-\hat{R}_{ij}(g^\ast) \perp \hat{R}_{kl}(g)$
\item $\hat{R}_{ij}(g^\ast)<R_{ij}(g) + \epsilon$ for all $(i,j) \in J(\mathcal{X})$ almost surely,
\end{enumerate}
then $\text{MSE}(\hat{R}(g)) \geq \text{MSE}(\hat{R}^{wb}(g))$. Given the condition (a), if we have $0<\alpha$ such that $\alpha<R_{ij}(g) - \hat{R}_{ij}(g^\ast) + \epsilon$ \text{for all} $(i,j) \in J(\mathcal{X})$ almost surely, then
\begin{equation}
    \text{MSE}(\hat{R}(g)) - \text{MSE}(\hat{R}^{wb}(g)) \geq 4\alpha^2 \sum_{(i,j)\in I}\Pr\lbrack\alpha < \hat{R}_{ij}(g^\ast) - \hat{R}_{ij}(g) - \epsilon\rbrack.
\end{equation}
\end{restatable}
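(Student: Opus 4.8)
The plan is to reduce the claim to a coordinate-wise comparison and then apply, at each $(i,j)$, a dynamic analogue of the flooding MSE-reduction identity. First I would expose the reflection structure of the wave risk. Writing $b_{ij}\coloneqq\hat{R}_{ij}(g^\ast)-\epsilon$ for the data-dependent bound, Equation~(\ref{eq:wavebound}) reads $\hat{R}^{wb}_{ij}(g)=|\hat{R}_{ij}(g)-b_{ij}|+b_{ij}$, so that $\hat{R}^{wb}_{ij}(g)=\hat{R}_{ij}(g)$ off $J(\mathcal{X})$ and $\hat{R}^{wb}_{ij}(g)=2b_{ij}-\hat{R}_{ij}(g)$ on $J(\mathcal{X})$. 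Hence the correction $c_{ij}\coloneqq\hat{R}^{wb}_{ij}(g)-\hat{R}_{ij}(g)=2\max(b_{ij}-\hat{R}_{ij}(g),0)$ is nonnegative and supported exactly on $J(\mathcal{X})$. This is the object through which every later step is phrased.

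Next I would expand the MSE gap. Using $x^2-y^2=(x-y)(x+y)$ with $x=\hat{R}(g)-R(g)$ and $y=\hat{R}^{wb}(g)-R(g)=x+\tfrac{1}{MK}\sum_{ij}c_{ij}$, the difference $\mathrm{MSE}(\hat{R}(g))-\mathrm{MSE}(\hat{R}^{wb}(g))$ becomes the negative expectation of a product of $\sum_{kl}c_{kl}$ with $\sum_{ij}\big(2(\hat{R}_{ij}(g)-R_{ij}(g))+c_{ij}\big)$. Two facts drive the reduction: each $\hat{R}_{ij}(g)$ is an unbiased estimator of $R_{ij}(g)$, so $\mathbb{E}[\hat{R}_{ij}(g)-R_{ij}(g)]=0$; and condition (a), which makes $c_{kl}$ — a function of $\hat{R}_{kl}(g)-\hat{R}_{kl}(g^\ast)$ — independent of $\hat{R}_{ij}(g)$ whenever $(i,j)\neq(k,l)$. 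Together they annihilate the cross-coordinate terms that pair a correction with a mean-zero empirical fluctuation, collapsing the gap into a sum of per-coordinate contributions $-\mathbb{E}\big[c_{ij}\,(2(\hat{R}_{ij}(g)-R_{ij}(g))+c_{ij})\big]$.

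The sign of each per-coordinate term then comes from condition (b). On $J(\mathcal{X})$ I substitute $c_{ij}=2(b_{ij}-\hat{R}_{ij}(g))$, which turns the bracket into $2(b_{ij}-R_{ij}(g))$ and the whole term into the flooding factorization $4\,(b_{ij}-\hat{R}_{ij}(g))\,(b_{ij}-R_{ij}(g))$. On $J(\mathcal{X})$ the first factor is strictly positive, while condition (b), i.e. $\hat{R}_{ij}(g^\ast)<R_{ij}(g)+\epsilon$, is precisely $b_{ij}-R_{ij}(g)<0$; thus the product is nonpositive and each coordinate contributes nonnegatively to the gap, giving $\mathrm{MSE}(\hat{R}(g))\geq\mathrm{MSE}(\hat{R}^{wb}(g))$. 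For the quantitative bound I would restrict to the event $\{\alpha<\hat{R}_{ij}(g^\ast)-\hat{R}_{ij}(g)-\epsilon\}=\{b_{ij}-\hat{R}_{ij}(g)>\alpha\}$, on which the first factor exceeds $\alpha$, and use the hypothesis $\alpha<R_{ij}(g)-\hat{R}_{ij}(g^\ast)+\epsilon=R_{ij}(g)-b_{ij}$ to get $-(b_{ij}-R_{ij}(g))>\alpha$; on this event the per-coordinate contribution exceeds $4\alpha^2$, and elsewhere it is still nonnegative. Bounding the contribution below by $4\alpha^2\mathbf{1}[\,\cdot\,]$ and summing the probabilities over $(i,j)\in I$ yields the stated inequality.

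The step I expect to be the main obstacle is the second one: justifying that condition (a) together with unbiasedness genuinely decouples the aggregate MSE into per-coordinate pieces. The corrections $c_{ij}$ are themselves correlated across coordinates, and the hypotheses only control their interaction with the mean-zero fluctuations $\hat{R}_{ij}(g)-R_{ij}(g)$, not with one another; getting this cross-term bookkeeping exactly right — and confirming that the surviving terms are precisely the per-coordinate ones — is the crux. Once the gap is in per-coordinate form, the remaining dynamic-flooding argument of the last two paragraphs is routine.
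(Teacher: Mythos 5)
Your proposal follows the same route as the paper's own proof in Appendix~\ref{appendix:proof}: the same difference-of-squares factorization of $\mathrm{MSE}(\hat{R}(g))-\mathrm{MSE}(\hat{R}^{wb}(g))$, the same use of condition (a) together with unbiasedness of $\hat{R}_{kl}(g)$ to annihilate cross terms pairing a correction with a mean-zero fluctuation, the same sign argument from condition (b) on the diagonal terms, and the same event restriction to obtain the $4\alpha^2$ bound. (You and the paper also share a minor sloppiness: the $1/(MK)^2$ normalization coming from $\hat{R}=\frac{1}{MK}\sum_{j,k}\hat{R}_{jk}$ is dropped, which only affects the constant in the final display.)

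The decisive point is the obstacle you flag in your last paragraph: it is a genuine gap, and the paper's proof does not close it --- it closes it by omission. Expanding the product of the two sums, the cross terms for $(i,j)\neq(k,l)$ contain, in addition to the fluctuation part the paper keeps, the correction--correction products
\begin{equation*}
\sum_{(i,j)\neq(k,l)} \mathbf{1}_{J(\mathcal{X})}(i,j)\bigl(\hat{R}_{ij}(g)-\hat{R}_{ij}(g^\ast)+\epsilon\bigr)\,\mathbf{1}_{J(\mathcal{X})}(k,l)\bigl(\hat{R}_{kl}(g^\ast)-\hat{R}_{kl}(g)-\epsilon\bigr),
\end{equation*}
i.e.\ $-\tfrac14\sum_{(i,j)\neq(k,l)} c_{ij}c_{kl}$ in your notation. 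Each summand is pointwise nonpositive, yet it appears in neither of the paper's two displayed sums (the first retains only $\hat{R}_{kl}(g)-R_{kl}(g)$, the second is purely diagonal), so the paper in effect proves an upper bound on the MSE gap where a lower bound is required. Moreover, conditions (a) and (b) cannot rescue this step: (a) makes $c_{ij}$, a function of $X_{ij}\coloneqq\hat{R}_{ij}(g)-\hat{R}_{ij}(g^\ast)$, independent of $\hat{R}_{kl}(g)$, but constrains neither the joint law of $X_{ij}$ and $X_{kl}$ nor their means, and even under full independence $\mathbb{E}[c_{ij}c_{kl}]=\mathbb{E}[c_{ij}]\mathbb{E}[c_{kl}]>0$ whenever two coordinates flood with positive probability. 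One can even satisfy (a), (b), and unbiasedness with a common flooding trigger that is independent of both empirical risks, with flooding depth close to the slack permitted by (b), so that these dropped terms strictly dominate the diagonal gains and the claimed inequality fails; repairing the statement requires an extra hypothesis (e.g., that distinct coordinates are almost surely not flooded simultaneously, or an explicit control of $\sum_{(i,j)\neq(k,l)}\mathbb{E}[c_{ij}c_{kl}]$). In short, every step you carried out matches the paper, and the single step you declined to justify is precisely the step the paper performs without justification; your instinct that this is the crux is correct, and it is not a defect of your bookkeeping but of the theorem's hypotheses.
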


\vspace{-0.1in}
\begin{proof}
Please see Appendix~\ref{appendix:proof}.
\end{proof}
\vspace{-0.1in}

Intuitively, Theorem~\ref{thm:mse_reduction} states that the MSE of the empirical risk estimator can be reduced when the following conditions hold: 
(i) The network $g^\ast$ has sufficient expressive power so that the loss difference between $g$ and $g^\ast$ at each output variable is unrelated to the loss at the other output variables in $g$.
(ii) $\hat{R}_{ij}(g^\ast)-\epsilon$ likely lies in between $\hat{R}_{ij}(g)$ and $R_{ij}(g)$. 
It is preferable to have $g^\ast$ as the EMA model of $g$ since the training loss of the EMA model cannot be readily below the test loss of the model. Then, $\epsilon$ can be chosen as a fixed small value so that the training loss of the source model at each output variable can be closely bounded below by the test loss at that variable.
\section{Experiments}
\label{sec:experiments}

\subsection{WaveBound with Forecasting Models}

\begin{table}[t!]
\begin{center}
\caption{The results of WaveBound in the multivariate setting. All results are averaged over 3 trials.}
\label{table:exp_main_multi}
\vspace{-0.1in}
\begin{adjustbox}{width=1\textwidth}
\scriptsize
\setlength{\tabcolsep}{3pt}
\begin{tabular}[1.0\linewidth]{l c | cccc | cccc | cccc | cccc}
\toprule
\multicolumn{2}{l|}{\multirow{2}{*}{Models}} &
\multicolumn{4}{c|}{Autoformer~\cite{WuXWL21}} &
\multicolumn{4}{c|}{Pyraformer~\cite{liu2022pyraformer}} &
\multicolumn{4}{c|}{Informer~\cite{ZhouZPZLXZ21}} &
\multicolumn{4}{c}{LSTNet~\cite{LaiCYL18}} \\
&&
\multicolumn{2}{c}{Origin} & \multicolumn{2}{c|}{\textbf{w/ Ours}} & \multicolumn{2}{c}{Origin} & \multicolumn{2}{c|}{\textbf{w/ Ours}} & \multicolumn{2}{c}{Origin} & \multicolumn{2}{c|}{\textbf{w/ Ours}} & \multicolumn{2}{c}{Origin} & \multicolumn{2}{c}{\textbf{w/ Ours}}  \\
\cmidrule(lr){3-4} \cmidrule(lr){5-6}
\cmidrule(lr){7-8} \cmidrule(lr){9-10}
\cmidrule(lr){11-12} \cmidrule(lr){13-14}
\cmidrule(lr){15-16} \cmidrule(lr){17-18}
\multicolumn{2}{l|}{Metric} &
MSE & MAE & MSE & MAE &
MSE & MAE & MSE & MAE &
MSE & MAE & MSE & MAE &
MSE & MAE & MSE & MAE \\
\toprule
\multirow{4}{*}{\rotatebox[origin=c]{90}{ETTm2}}
& 96 &
0.262 & 0.326 & \textbf{0.204} & \textbf{0.285} &
0.363 & 0.451 & \textbf{0.281} & \textbf{0.386} &
0.376 & 0.477 & \textbf{0.334} & \textbf{0.429} &
0.455 & 0.511 & \textbf{0.268} & \textbf{0.368} \\
& 192 &
0.284 & 0.342 & \textbf{0.265} & \textbf{0.322} &
0.708 & 0.648 & \textbf{0.624} & \textbf{0.599} &
0.751 & 0.672 & \textbf{0.698} & \textbf{0.631} &
0.706 & 0.660 & \textbf{0.464} & \textbf{0.508} \\
& 336 &
0.338 & 0.374 & \textbf{0.320} & \textbf{0.356} &
1.130 & 0.846 & \textbf{1.072} & \textbf{0.829} &
1.440 & 0.917 & \textbf{1.087} & \textbf{0.845} &
1.161 & 0.868 & \textbf{0.781} & \textbf{0.695} \\
& 720 &
0.446 & 0.435 & \textbf{0.413} & \textbf{0.408} &
2.995 & 1.386 & \textbf{1.917} & \textbf{1.119} &
3.897 & 1.498 & \textbf{2.984} & \textbf{1.411} &
3.288 & 1.494 & \textbf{2.312} & \textbf{1.239} \\
\midrule
\multirow{4}{*}{\rotatebox[origin=c]{90}{ECL}}
& 96 &
0.202 & 0.317 & \textbf{0.176} & \textbf{0.288} &
0.256 & 0.360 & \textbf{0.241} & \textbf{0.345} &
0.335 & 0.417 & \textbf{0.289} & \textbf{0.378} &
0.268 & 0.366 & \textbf{0.185} & \textbf{0.291} \\
& 192 &
0.235 & 0.340 & \textbf{0.205} & \textbf{0.317} &
0.272 & 0.378 & \textbf{0.256} & \textbf{0.360} &
0.341 & 0.426 & \textbf{0.298} & \textbf{0.388} &
0.277 & 0.375 & \textbf{0.197} & \textbf{0.304} \\
& 336 &
0.247 & 0.351 & \textbf{0.217} & \textbf{0.327} &
0.278 & 0.383 & \textbf{0.269} & \textbf{0.371} &
0.369 & 0.448 & \textbf{0.305} & \textbf{0.394} &
0.284 & 0.382 & \textbf{0.217} & \textbf{0.326} \\
& 720 &
0.270 & 0.371 & \textbf{0.260} & \textbf{0.359} &
0.291 & 0.385 & \textbf{0.283} & \textbf{0.377} &
0.396 & 0.457 & \textbf{0.311} & \textbf{0.398} &
0.316 & 0.404 & \textbf{0.250} & \textbf{0.350} \\
\midrule
\multirow{4}{*}{\rotatebox[origin=c]{90}{Exchange}}
& 96 &
0.153 & 0.285 & \textbf{0.146} & \textbf{0.274} &
\textbf{0.604} & \textbf{0.624} & 0.615 & 0.627 &
0.979 & 0.791 & \textbf{0.878} & \textbf{0.765} &
0.483 & 0.518 & \textbf{0.357} & \textbf{0.432} \\
& 192 &
0.297 & 0.397 & \textbf{0.262} & \textbf{0.373} &
0.982 & 0.806 & \textbf{0.953} & \textbf{0.797} &
1.147 & \textbf{0.854} & \textbf{1.136} & 0.859 &
0.706 & 0.646 & \textbf{0.621} & \textbf{0.593} \\
& 336 &
0.438 & 0.490 & \textbf{0.425} & \textbf{0.483} &
1.264 & \textbf{0.934} & \textbf{1.263} & 0.944 &
1.592 & 1.014 & \textbf{1.461} & \textbf{0.992} &
1.055 & 0.800 & \textbf{0.837} & \textbf{0.691} \\
& 720 &
1.207 & 0.860 & \textbf{1.088} & \textbf{0.810} &
1.663 & 1.051 & \textbf{1.562} & \textbf{1.016} &
2.540 & 1.306 & \textbf{2.496} & \textbf{1.294} &
2.198 & 1.127 & \textbf{1.374} & \textbf{0.894} \\
\midrule
\multirow{4}{*}{\rotatebox[origin=c]{90}{Traffic}}
& 96 &
0.645 & 0.399 & \textbf{0.596} & \textbf{0.352} &
0.635 & 0.364 & \textbf{0.622} & \textbf{0.341} &
0.731 & 0.412 & \textbf{0.671} & \textbf{0.364} &
0.735 & 0.446 & \textbf{0.587} & \textbf{0.356} \\
& 192 &
0.644 & 0.407 & \textbf{0.607} & \textbf{0.370} &
0.658 & 0.376 & \textbf{0.646} & \textbf{0.355} &
0.751 & 0.422 & \textbf{0.666} & \textbf{0.360} &
0.750 & 0.446 & \textbf{0.595} & \textbf{0.365} \\
& 336 &
0.625 & 0.390 & \textbf{0.603} & \textbf{0.361} &
0.668 & 0.377 & \textbf{0.653} & \textbf{0.355} &
0.822 & 0.465 & \textbf{0.709} & \textbf{0.387} &
0.778 & 0.455 & \textbf{0.623} & \textbf{0.378} \\
& 720 &
0.650 & 0.398 & \textbf{0.642} & \textbf{0.383} &
0.698 & 0.390 & \textbf{0.672} & \textbf{0.364} &
0.957 & 0.539 & \textbf{0.764} & \textbf{0.421} &
0.815 & 0.470 & \textbf{0.648} & \textbf{0.383} \\
\midrule
\multirow{4}{*}{\rotatebox[origin=c]{90}{Weather}}
& 96 &
0.294 & 0.355 & \textbf{0.227} & \textbf{0.296} &
0.235 & 0.321 & \textbf{0.193} & \textbf{0.272} &
0.378 & 0.428 & \textbf{0.355} & \textbf{0.415} &
0.237 & 0.310 & \textbf{0.202} & \textbf{0.275} \\
& 192 &
0.308 & 0.368 & \textbf{0.283} & \textbf{0.340} &
0.340 & 0.415 & \textbf{0.306} & \textbf{0.372} &
0.462 & 0.467 & \textbf{0.424} & \textbf{0.448} &
0.277 & 0.343 & \textbf{0.254} & \textbf{0.316} \\
& 336 &
0.364 & 0.396 & \textbf{0.335} & \textbf{0.370} &
0.453 & 0.484 & \textbf{0.403} & \textbf{0.441} &
0.575 & 0.535 & \textbf{0.506} & \textbf{0.484} &
0.326 & 0.378 & \textbf{0.309} & \textbf{0.358} \\
& 720 &
0.426 & 0.433 & \textbf{0.401} & \textbf{0.411} &
0.599 & 0.563 & \textbf{0.535} & \textbf{0.519} &
1.024 & 0.751 & \textbf{0.972} & \textbf{0.712} &
0.412 & 0.431 & \textbf{0.398} & \textbf{0.415} \\
\midrule
\multirow{4}{*}{\rotatebox[origin=c]{90}{ILI}}
& 24 &
3.468 & 1.299 & \textbf{3.118} & \textbf{1.200} &
4.822 & 1.489 & \textbf{4.679} & \textbf{1.459} &
5.356 & 1.590 & \textbf{4.947} & \textbf{1.494} &
7.934 & 2.091 & \textbf{6.331} & \textbf{1.816} \\
& 36 &
3.441 & 1.273 & \textbf{3.310} & \textbf{1.240} &
4.831 & \textbf{1.479} & \textbf{4.763} & 1.483 &
5.131 & 1.569 & \textbf{5.027} & \textbf{1.537} &
8.793 & 2.214 & \textbf{6.560} & \textbf{1.848} \\
& 48 &
3.086 & 1.184 & \textbf{2.927} & \textbf{1.128} &
4.789 & 1.465 & \textbf{4.524} & \textbf{1.439} &
5.150 & 1.564 & \textbf{4.920} & \textbf{1.514} &
7.968 & 2.068 & \textbf{6.154} & \textbf{1.779} \\
& 60 &
2.843 & 1.136 & \textbf{2.785} & \textbf{1.116} &
4.876 & 1.495 & \textbf{4.573} & \textbf{1.465} &
5.407 & 1.604 & \textbf{5.013} & \textbf{1.528} &
7.387 & 1.984 & \textbf{6.119} & \textbf{1.758} \\
\bottomrule
\end{tabular}
\end{adjustbox}
\end{center}
\vspace{-0.1in}
\end{table}

\begin{table}[t!]
\begin{center}
\caption{The results of WaveBound in the univariate setting. All results are averaged over 3 trials.}
\label{table:exp_main_uni}
\vspace{-0.1in}
\begin{adjustbox}{width=1\textwidth}
\scriptsize
\setlength{\tabcolsep}{3pt}
\begin{tabular}[1.0\linewidth]{l c | cccc | cccc | cccc | cccc}
\toprule
\multicolumn{2}{l|}{\multirow{2}{*}{Models}} &
\multicolumn{4}{c|}{Autoformer~\cite{WuXWL21}} &
\multicolumn{4}{c|}{Pyraformer~\cite{liu2022pyraformer}} &
\multicolumn{4}{c|}{Informer~\cite{ZhouZPZLXZ21}} &
\multicolumn{4}{c}{N-BEATS~\cite{OreshkinCCB20}} \\
&&
\multicolumn{2}{c}{Origin} & \multicolumn{2}{c|}{\textbf{w/ Ours}} & \multicolumn{2}{c}{Origin} & \multicolumn{2}{c|}{\textbf{w/ Ours}} & \multicolumn{2}{c}{Origin} & \multicolumn{2}{c|}{\textbf{w/ Ours}} & \multicolumn{2}{c}{Origin} & \multicolumn{2}{c}{\textbf{w/ Ours}}  \\
\cmidrule(lr){3-4} \cmidrule(lr){5-6}
\cmidrule(lr){7-8} \cmidrule(lr){9-10}
\cmidrule(lr){11-12} \cmidrule(lr){13-14}
\cmidrule(lr){15-16} \cmidrule(lr){17-18}
\multicolumn{2}{l|}{Metric} &
MSE & MAE & MSE & MAE &
MSE & MAE & MSE & MAE &
MSE & MAE & MSE & MAE &
MSE & MAE & MSE & MAE \\
\toprule
\multirow{4}{*}{\rotatebox[origin=c]{90}{ETTm2}}
& 96 &
0.098 & 0.239 & \textbf{0.085} & \textbf{0.221} &
0.078 & 0.209 & \textbf{0.070} & \textbf{0.197} &
0.085 & 0.224 & \textbf{0.081} & \textbf{0.218} &
0.073 & 0.198 & \textbf{0.067} & \textbf{0.188} \\
& 192 &
0.130 & 0.277 & \textbf{0.116} & \textbf{0.262} &
0.114 & 0.257 & \textbf{0.110} & \textbf{0.256} &
0.122 & 0.273 & \textbf{0.118} & \textbf{0.270} &
0.107 & 0.246 & \textbf{0.103} & \textbf{0.241} \\
& 336 &
0.162 & 0.311 & \textbf{0.143} & \textbf{0.293} &
0.178 & 0.325 & \textbf{0.153} & \textbf{0.306} &
0.153 & \textbf{0.304} & \textbf{0.148} & 0.305 &
0.163 & 0.310 & \textbf{0.135} & \textbf{0.284} \\
& 720 &
0.194 & 0.344 & \textbf{0.188} & \textbf{0.338} &
0.198 & 0.351 & \textbf{0.169} & \textbf{0.329} &
0.196 & 0.351 & \textbf{0.189} & \textbf{0.349} &
0.263 & 0.402 & \textbf{0.188} & \textbf{0.340} \\
\midrule
\multirow{4}{*}{\rotatebox[origin=c]{90}{ECL}}
& 96 &
0.462 & 0.502 & \textbf{0.447} & \textbf{0.496} &
0.240 & 0.351 & \textbf{0.229} & \textbf{0.347} &
0.266 & 0.371 & \textbf{0.261} & \textbf{0.369} &
0.304 & 0.382 & \textbf{0.298} & \textbf{0.378} \\
& 192 &
0.557 & 0.565 & \textbf{0.515} & \textbf{0.538} &
0.262 & 0.367 & \textbf{0.253} & \textbf{0.365} &
0.283 & 0.385 & \textbf{0.281} & \textbf{0.383} &
0.323 & 0.396 & \textbf{0.322} & \textbf{0.395} \\
& 336 &
0.613 & 0.593 & \textbf{0.531} & \textbf{0.543} &
0.285 & \textbf{0.386} & \textbf{0.283} & \textbf{0.386} &
0.338 & 0.428 & \textbf{0.332} & \textbf{0.426} &
0.385 & 0.430 & \textbf{0.369} & \textbf{0.422} \\
& 720 &
0.691 & 0.632 & \textbf{0.604} & \textbf{0.591} &
0.309 & \textbf{0.411} & \textbf{0.307} & 0.415 &
0.631 & 0.612 & \textbf{0.378} & \textbf{0.463} &
0.462 & 0.487 & \textbf{0.433} & \textbf{0.473} \\
\bottomrule
\end{tabular}
\end{adjustbox}
\end{center}
\vspace{-0.2in}
\end{table}

In this section, we evaluate our WaveBound method on real-world benchmarks using various time series forecasting models, including the state-of-the-art models.

\textbf{Baselines.}
Since our method can be easily applied to deep-learning-based forecasting models, we evaluate our regularization adopted by several baselines including the state-of-the-art method. For the multivariate setting, we select Autoformer~\cite{WuXWL21}, Pyraformer~\cite{liu2022pyraformer}, Informer~\cite{ZhouZPZLXZ21}, LSTNet~\cite{LaiCYL18}, and TCN~\cite{abs-1803-01271}. For the univariate setting, we additionally include N-BEATS~\cite{OreshkinCCB20} for the baseline.

\textbf{Datasets.} We examine the performance of forecasting models in six real-world benchmarks. (1) The Electricity Transformer Temperature (ETT)~\cite{ZhouZPZLXZ21} dataset contains two years of data from two separate counties in China with intervals of 1-hour level (ETTh1, ETTh2) and 15 minutes level (ETTm1, ETTm2) collected from electricity transformers. Each time step contains an oil temperature and 6 additional features. (2) The Electricity (ECL)~\footnote{https://archive.ics.uci.edu/ml/datasets/ElectricityLoadDiagrams20112014} dataset comprises 2 years of hourly electricity consumption of 321 clients. (3) The Exchange~\cite{LaiCYL18} dataset provides a collection of eight distinct countries on a daily basis. (4) The Traffic~\footnote{http://pems.dot.ca.gov} dataset contains hourly statistics of various sensors in San Francisco Bay provided by the California Department of Transportation. Road occupancy rate is expressed as a real number between 0 and 1.  (5) The Weather~\footnote{https://www.ncei.noaa.gov/data/local-climatological-data/} dataset records 4 years of data (2010-2013) of 21 meteorological indicators collected at around 1,600 landmarks in the United States. (6) The ILI~\footnote{https://gis.cdc.gov/grasp/fluview/fluportaldashboard.html} dataset contains data from the Centers for Disease Control and Prevention's weekly reported influenza-like illness patients from 2002 to 2021, which describes the ratio of patients seen with ILI to the overall number of patients. As in Autoformer~\cite{WuXWL21}, we set $L=36$ and $M \in \{24,36,48,60\}$ for the ILI dataset, and set $L=96$ and $M \in\{96,192,336,720\}$ for the other datasets.
We split each dataset into train/validation/test as follows: 6:2:2 ratio for the ETT dataset and 7:1:2 ratio for the rest.

\begin{figure}[t!]
\begin{center}
    \includegraphics[width=1.0\linewidth]{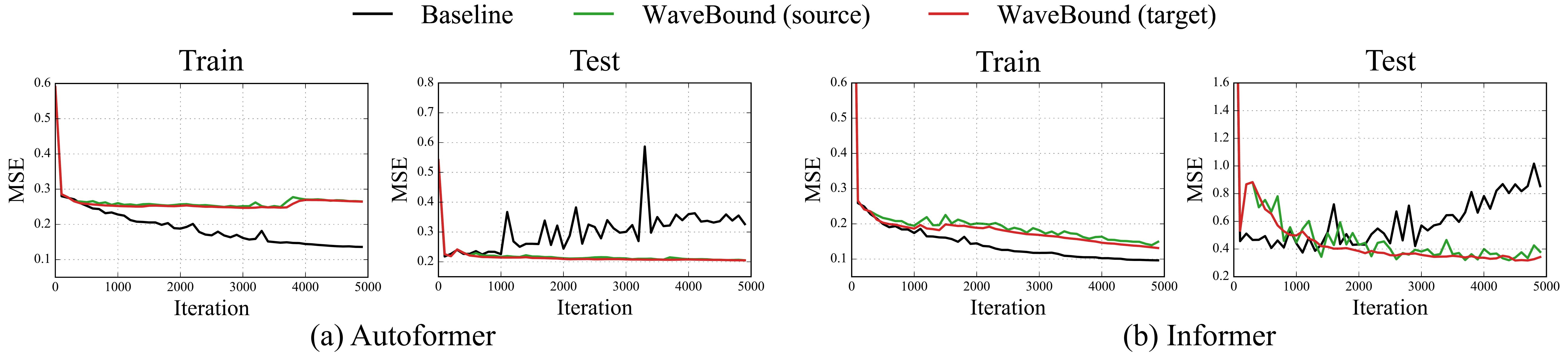}
\end{center}
\vspace{-0.1in}
    \caption{The training curves of models with and without WaveBound on the ETTm2 dataset. Without WaveBound, the training loss of both models decreases, but the test loss increases (See black lines), which indicates that both models tend to overfit at the training data. In contrast, the test loss of models with WaveBound continue to decrease after learning for even more epochs.
    }
\label{fig:exp_train_curve}
\end{figure}
\begin{figure}[t!]
\begin{center}
    \includegraphics[width=1.0\linewidth]{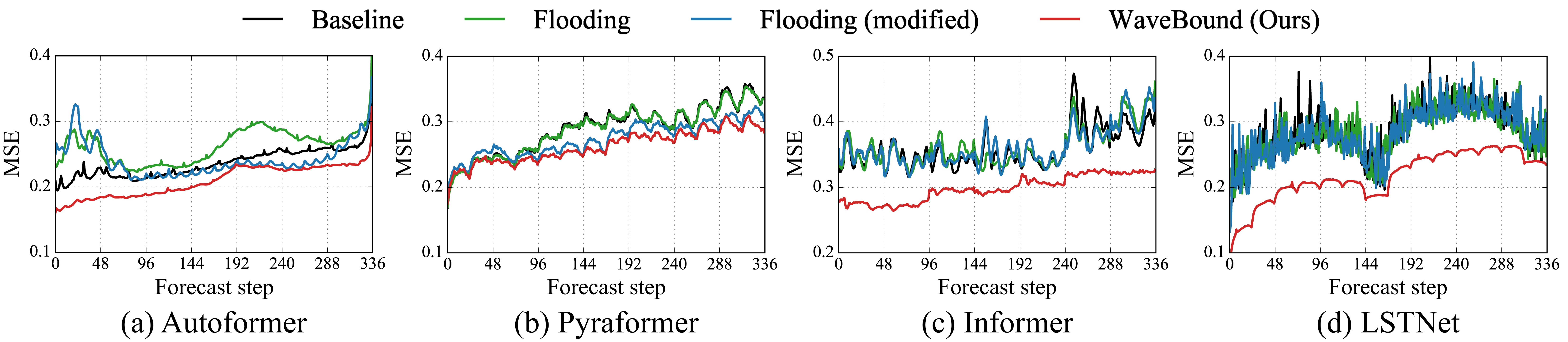}
\end{center}
\vspace{-0.1in}
    \caption{The test error of models trained with different regularization methods on the ECL dataset. Compared with original flooding and constant flooding, the test error of WaveBound is consistently lower at all time steps, which indicates that our method successfully improves generalization regardless of the range of predictions.
    }
\vspace{-0.1in}
\label{fig:exp_step_loss}
\end{figure}

\textbf{Multivariate Results.}
Table~\ref{table:exp_main_multi} shows the performance of our method in terms of mean squared error (MSE) and mean absolute error (MAE) in the multivariate setting. 
We can observe that our method consistently shows improvements for various forecasting models including the state-of-the-art methods. Notably, WaveBound improves both MAE and MSE of Autoformer on the ETTm2 dataset by \textbf{22.13\%} $(0.262 \rightarrow 0.204)$ in MSE and \textbf{12.57\%} $(0.326 \rightarrow 0.285)$ in MAE when $M = 96$. In particular, the performance is improved by \textbf{41.10\%} $(0.455 \rightarrow 0.268)$ in MSE and \textbf{27.98\%} $(0.511 \rightarrow 0.368)$ in MAE for LSTNet. For long-term ETTm2 forecasting settings $(M = 720)$, WaveBound improves the performance of Autoformer by \textbf{7.39\%} $(0.446 \rightarrow 0.413)$ in MSE and \textbf{6.20\%} $(0.435 \rightarrow 0.408)$ in MAE.
In all experiments, our method consistently shows performance improvements with various forecasting models.
The results of full baselines and benchmarks are reported in Appendix~\ref{appendix:additional_results}.

\textbf{Univariate Results.}
WaveBound also shows superior results in the univariate setting, as reported in Table~\ref{table:exp_main_uni}. In particular, for N-BEATS, which is designed especially for univariate time series forecasting, our method improves the performance on the ETTm2 dataset by \textbf{8.22\%} $(0.073 \rightarrow 0.067)$ in MSE and \textbf{5.05\%} $(0.198 \rightarrow 0.188)$ in MAE, when $M = 96$. For the ECL dataset, Informer with WaveBound shows improvements of \textbf{40.10\%} $(0.631 \rightarrow 0.378)$ in MSE and \textbf{24.35\%} $(0.612 \rightarrow 0.463)$ in MAE when $M = 720$. The results of the full baselines and benchmarks are reported in Appendix~\ref{appendix:additional_results}.

\textbf{Generalization Gaps.} To identify overfitting, the generalization gap, which is the difference between the training loss and the test loss, can be examined. To verify that our regularization truly prevents overfitting, we depict both the training loss and test loss for models with and without WaveBound in Figure~\ref{fig:exp_train_curve}. Without WaveBound, the test loss starts to increase abruptly, showing a high generalization gap. In contrast, when using WaveBound, we can observe that the test loss continued to decrease, which indicates that WaveBound successfully addresses overfitting in time series forecasting.

\subsection{Significance of Dynamically Adjusting Error Bounds}

In WaveBound, the error bound is dynamically adjusted at each iteration for each time step and feature. To validate the significance of such dynamics, we compare our WaveBound with the original flooding and constant flooding which use the constant values for flood levels, as introduced in Section~\ref{sec:method}. 

\begin{table}[t!]
\begin{center}
\caption{The results of variants of flooding regularization on the ECL dataset. We compare the forecasting accuracy when training the source network using different surrogates to empirical risk. 
All results are averaged over 3 trials and the constant value $b$ is faithfully searched in $\{0.00, 0.02, 0.04, ...0.40\}$.}
\label{table:exp_other_flooding}
\scriptsize
\setlength{\tabcolsep}{3pt}
\begin{tabular}[1.0\linewidth]{ll | c | cc | cc | cc | cc}
\toprule
\multirow{2}{*}{Method} &
\multicolumn{2}{l}{\multirow{2}{*}{Estimated risk (w/o constant)}} &
\multicolumn{2}{c}{Autoformer} &
\multicolumn{2}{c}{Pyraformer} &
\multicolumn{2}{c}{Informer} &
\multicolumn{2}{c}{LSTNet} \\

\cmidrule(lr){4-5} \cmidrule(lr){6-7}
\cmidrule(lr){8-9} \cmidrule(lr){10-11}

\multicolumn{3}{c}{} &
96 & 336 & 96 & 336 & 96 & 336 & 96 & 336 \\

\toprule

\multirow{2}{*}{Base model} & \multirow{2}{*}{
$\hat{R}(g)$
} & 
MSE &
0.202 & 0.247 & 0.256 & 0.278 & 0.335 & 0.369 & 0.268 & 0.284 \\
&& MAE &
0.317 & 0.351 & 0.360 & 0.383 & 0.417 & 0.448 & 0.366 & 0.382 \\

\midrule
\multirow{2}{*}{Flooding~\cite{IshidaYS0S20}} & \multirow{2}{*}{
$\lvert \hat{R}(g) - b\rvert$
} & 
MSE &
0.194 & 0.247 & 0.257 & 0.277 & 0.335 & 0.368 & 0.268 & 0.284 \\
&& MAE &
0.309 & 0.351 & 0.360 & 0.382 & 0.416 & 0.447 & 0.366 & 0.381 \\

\midrule
\multirow{2}{*}{Constant flooding} & \multirow{2}{*}{
$\frac{1}{MK}\sum\limits_{j, k}\lvert \hat{R}_{jk}(g) - b\rvert$
} & 
MSE &
0.198 & 0.247 & 0.257 & 0.277 & 0.333 & 0.369 & 0.268 & 0.284 \\
&& MAE &
0.314 & 0.351 & 0.360 & 0.382 & 0.415 & 0.448 & 0.366 & 0.382  \\

\midrule
\multirow{2}{*}{WaveBound (Avg.)} & \multirow{2}{*}{
$\lvert \hat{R}(g) - \hat{R}(g^\ast) + \epsilon \rvert$
} & 
MSE &
0.194 & 0.221 & 0.248 & 0.288 & 0.302 & 0.322 & 0.208 & 0.246 \\
&& MAE &
0.309 & 0.331 & 0.352 & 0.388 & 0.388 & 0.407 & 0.314 & 0.356 \\

\midrule
\multirow{2}{*}{\textbf{WaveBound (Indiv.)}} & \multirow{2}{*}{
$\frac{1}{MK}\sum\limits_{j, k}\lvert \hat{R}_{jk}(g) - \hat{R}_{jk}(g^\ast) + \epsilon \rvert$
} & 
MSE &
\textbf{0.176} & \textbf{0.217} & \textbf{0.241} & \textbf{0.269} & \textbf{0.289} & \textbf{0.305} & \textbf{0.185} & \textbf{0.217} \\
&& MAE &
\textbf{0.288} & \textbf{0.327} & \textbf{0.345} & \textbf{0.371} & \textbf{0.378} & \textbf{0.394} & \textbf{0.291} & \textbf{0.326} \\

\bottomrule
\end{tabular}
\end{center}
\end{table}

Table~\ref{table:exp_other_flooding} compares the performance of variants of flooding regularization with different surrogates to empirical risk. The original flooding bounds the empirical risk by a constant while constant flooding bounds the risk at each feature and time step independently. We searched the flood level $b$ for the regularization methods with constant value $b$ in space of $\{0.00, 0.02, 0.04, ...0.40\}$. As we expected, we cannot achieve the improvements when using a fixed constant value. 
The models trained by individually bounding the error in each output variable outperform other baselines by a large margin, which concretely shows the effectiveness of our proposed WaveBound method.
The test error of different methods for each time step is depicted in Figure~\ref{fig:exp_step_loss}. For all time steps, WaveBound shows an improved generalization compared to original flooding and constant flooding, which highlights the significance of adjusting the error bounds in time series forecasting.

\subsection{Flatness of Loss Landscapes}

\begin{figure}[t!]
\vspace{0.1in}
\begin{center}
    \includegraphics[width=0.9\linewidth]{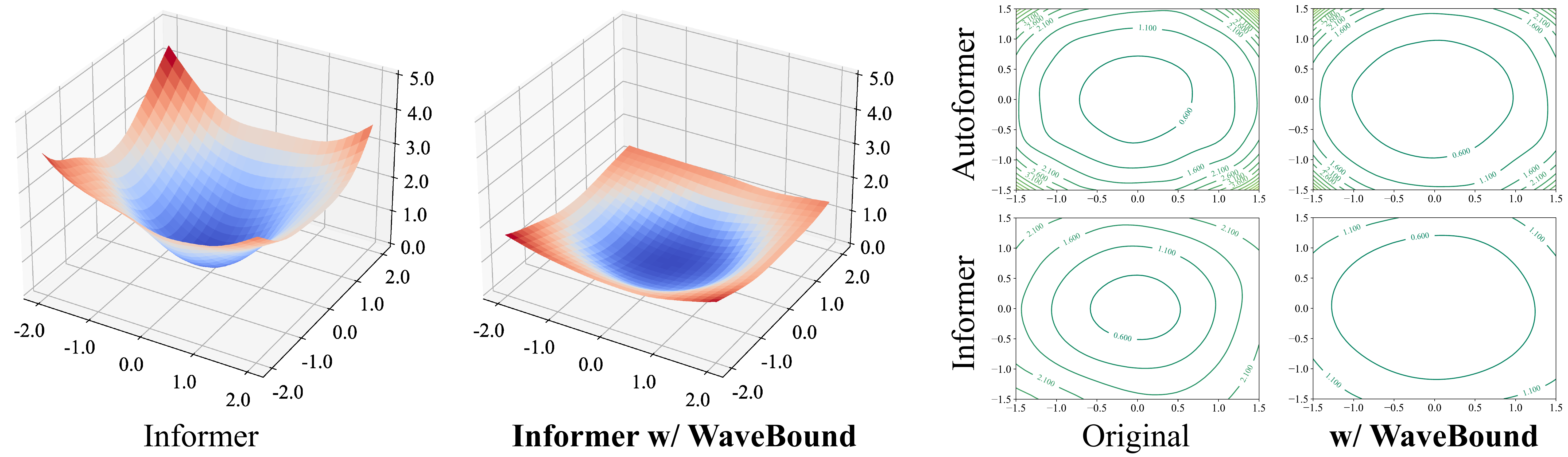}
\end{center}
    \caption{The loss landscapes of Autoformer and Informer trained with and without our WaveBound on the ETTh1 dataset. WaveBound flattens the loss landscapes for both models, improving the generalization of models.}
\vspace{-0.1in}
\label{fig:exp_loss_landscapes}
\end{figure}

The visualization of loss landscapes~\cite{Li0TSG18} is introduced to evaluate how the model adequately generalizes. It is known that the flatter the loss landscapes of the model, the better the robustness and generalization~\cite{NamukHowDoVision, XiangningWhenVision}. In this section, we depict the loss landscapes of models trained with and without WaveBound.
Figure~\ref{fig:exp_loss_landscapes} shows the loss landscapes of Autoformer and Informer. We visualize the loss landscapes using filter normalization~\cite{Li0TSG18} and evaluate the MSE for every model for a fair comparison. We can observe that the model with WaveBound shows smoother loss landscapes compared to that of the original model. In other words, WaveBound flattens the loss landscapes of time series forecasting models and stabilizes the training.
\section{Related Work}

\textbf{Time series forecasting.}
For time series forecasting tasks, various approaches have been proposed based on different principles.  Statistical approaches can provide interpretability as well as a theoretical guarantee. Auto-regressive Integrated Moving Average ~\cite{box1974some} and Prophet~\cite{taylor2018forecasting} are the most representative methods for statistical approaches. Another important class of time series forecasting is the state space models~\cite{durbin2012time, hyndman2008forecasting} (SSMs).
SSMs incorporate structural assumptions into the model and learn latent dynamics of the time series data. However, due to its superior results in long-range forecasting, deep-learning-based approaches are mainly considered as the prominent solution for time series forecasting. To model the temporal dependencies in time series data, recurrent neural networks (RNN)~\cite{QinSCCJC17,wen2017multi, SongRTS18, LiYS018} and convolutional neural networks (CNN)~\cite{LaiCYL18, Stoller0ED20} are introduced in time series forecasting. Temporal convolutional networks (TCN)~\cite{OordDZSVGKSK16, abs-1803-01271, SenYD19} are also considered for modeling temporal causality. Approaches combining SSMs and neural networks have also been proposed. DeepSSM~\cite{RangapuramSGSWJ18} estimates state space parameters using RNN. Linear latent dynamics have been efficiently modeled using a Kalman filter~\cite{BezenacRBBKSHGJ20, KlushynKSCS21}, and methodologies to model non-linear state variables have been proposed~\cite{KrishnanSS17}. Other recent approaches include using SSMs with Rao-blackwellised particle filters~\cite{KurleRBGG20} or SSMs with a duration switching mechanism~\cite{AnsariBKTSSWJ21}.

Recently, transformer-based models have been introduced in time series forecasting due to their ability to capture the long-range dependencies. However, applying a self-attention mechanism increases the complexity of sequence length $L$ from $O(L)$ to $O(L^2)$. To alleviate the computational burden, several attempts such as LogTrans~\cite{LiJXZCWY19}, Reformer~\cite{KitaevKL20}, and Informer~\cite{ZhouZPZLXZ21} re-designed the self-attention mechanism to a sparse version and reduced the complexity of the transformer. Haixu \etal~\cite{WuXWL21} proposed the decomposition architecture with an auto-correlation mechanism called Autoformer to provide the series-wise connections. To model the temporal dependencies of different ranges, the pyramidal attention module is proposed in Pyraformer~\cite{liu2022pyraformer}. However, we observe that these models still fail to generalize due to the training strategy that enforces models to fit to all inconsistent patterns appearing in real data. In this work, we mainly focus on providing the adequate error bounds to prevent models from being overfitted to a certain pattern in the training procedure.

\textbf{Regularization methods.}
Overfitting is one of the critical concerns for the over-parameterized deep networks. This can be identified by the generalization gap, which is the gap between the training loss and the test loss. 
To prevent overfitting and improve generalization, several regularization methods have been proposed. Decaying weight parameters~\cite{HansonP88}, early stopping~\cite{MorganB89}, and Dropout~\cite{SrivastavaHKSS14} have been commonly applied to avoid the high bias of deep networks. In addition to these methods, regularization methods specially designed for time series forecasting have also been proposed~\cite{abs-2109-11649, TaiebYBR17}.

Recently, the flooding~\cite{IshidaYS0S20} has been introduced to explicitly prevent \emph{zero training loss}. By providing the lower bound of training loss, called the \emph{flood level}, flooding allows the model not to completely fit to the training data, thus improving the generalization capacity of the model.
In this work, we also attempt to tackle the zero training loss in time series forecasting. However, we find that bounding the average loss in time series forecasting does not perform as well as expected. In time series forecasting, an appropriate error bound for each feature and time step should be carefully chosen. In addition, a constant flood level may not be suitable to time series forecasting where the difficulty of prediction changes for every iteration in the mini-batch training process.
To handle such issues, we suggest a novel regularization which fully considers the nature of time series forecasting.
\section{Conclusion}

In this work, we propose a simple yet effective regularization method called \emph{WaveBound} for time series forecasting. WaveBound provides dynamic error bounds for each time step and feature using the slow-moving average model. With the extensive experiments on real-world benchmarks, we show that our regularization scheme consistently improves the existing models including the state-of-the-art model, addressing overfitting in time series forecasting.
We also examine the generalization gaps and loss landscapes to discuss the effect of WaveBound in the training of over-parameterized networks. We believe that the significant performance improvements of our method indicate that regularization should be designed specifically for time series forecasting.
We believe that our work will provide a meaningful insight into future research.

% ============== Acknowledgement % ============== 

\begin{ack}
% This work was supported by the Institute of Information \& communications Technology Planning \& Evaluation (IITP) grant funded by the Korean government (MSIT) (No. 2019-0-00075, Artificial Intelligence Graduate School Program (KAIST)) and the National Research Foundation of Korea (NRF) grant funded by the Korean government (MSIT) (No. NRF-2022R1A2B5B02001913). This work was also partially supported by NAVER Corp.
This work was supported by the Institute of Information \& communications Technology Planning \& Evaluation (IITP) grant funded by the Korean government (MSIT) (No. 2019-0-00075, Artificial Intelligence Graduate School Program (KAIST)) and the National Research Foundation of Korea (NRF) grant funded by the Korean government (MSIT) (No. NRF-2022R1A2B5B02001913 and 2021H1D3A2A02096445). This work was also partially supported by NAVER Corp.
\end{ack}

\bibliographystyle{unsrt}
\bibliography{references}

%%%%%%%%%%%%%%%%%%%%%%%%%%%%%%%%%%%%%%%%%%%%%%%%%%%%%%%%%%%%
\newpage
\section*{Checklist}

%%% BEGIN INSTRUCTIONS %%%
% The checklist follows the references.  Please
% read the checklist guidelines carefully for information on how to answer these
% questions.  For each question, change the default \answerTODO{} to \answerYes{},
% \answerNo{}, or \answerNA{}.  You are strongly encouraged to include a {\bf
% justification to your answer}, either by referencing the appropriate section of
% your paper or providing a brief inline description.  For example:
% \begin{itemize}
%   \item Did you include the license to the code and datasets? \answerYes{See Section~\ref{gen_inst}.}
%   \item Did you include the license to the code and datasets? \answerNo{The code and the data are proprietary.}
%   \item Did you include the license to the code and datasets? \answerNA{}
% \end{itemize}
% Please do not modify the questions and only use the provided macros for your
% answers.  Note that the Checklist section does not count towards the page
% limit.  In your paper, please delete this instructions block and only keep the
% Checklist section heading above along with the questions/answers below.
%%% END INSTRUCTIONS %%%

\begin{enumerate}

\item For all authors...
\begin{enumerate}
  \item Do the main claims made in the abstract and introduction accurately reflect the paper's contributions and scope?
    \answerYes{}
  \item Did you describe the limitations of your work?
    \answerYes{See Appendix.}
  \item Did you discuss any potential negative societal impacts of your work?
    \answerYes{See Appendix.}
  \item Have you read the ethics review guidelines and ensured that your paper conforms to them?
    \answerYes{}
\end{enumerate}

\item If you are including theoretical results...
\begin{enumerate}
  \item Did you state the full set of assumptions of all theoretical results?
    \answerYes{}
        \item Did you include complete proofs of all theoretical results?
    \answerYes{See Appendix.}
\end{enumerate}

\item If you ran experiments...
\begin{enumerate}
  \item Did you include the code, data, and instructions needed to reproduce the main experimental results (either in the supplemental material or as a URL)?
    \answerYes{See Appendix.}
  \item Did you specify all the training details (e.g., data splits, hyperparameters, how they were chosen)?
    \answerYes{See Appendix.}
        \item Did you report error bars (e.g., with respect to the random seed after running experiments multiple times)?
    \answerYes{}
        \item Did you include the total amount of compute and the type of resources used (e.g., type of GPUs, internal cluster, or cloud provider)?
    \answerYes{}
\end{enumerate}

\item If you are using existing assets (e.g., code, data, models) or curating/releasing new assets...
\begin{enumerate}
  \item If your work uses existing assets, did you cite the creators?
    \answerYes{}
  \item Did you mention the license of the assets?
    \answerNo{We used publicly available datasets.}
  \item Did you include any new assets either in the supplemental material or as a URL?
    \answerYes{The codes are provided in the supplementary material.}
  \item Did you discuss whether and how consent was obtained from people whose data you're using/curating?
    \answerYes{In this paper, we used the benchmark datasets widely used for research purpose.}
  \item Did you discuss whether the data you are using/curating contains personally identifiable information or offensive content?
    \answerYes{The benchmark datasets used in this paper are widely used for research purpose and do not contain any offensive contents.}
\end{enumerate}

\item If you used crowdsourcing or conducted research with human subjects...
\begin{enumerate}
  \item Did you include the full text of instructions given to participants and screenshots, if applicable?
    \answerNA{}
  \item Did you describe any potential participant risks, with links to Institutional Review Board (IRB) approvals, if applicable?
    \answerNA{}
  \item Did you include the estimated hourly wage paid to participants and the total amount spent on participant compensation?
    \answerNA{}
\end{enumerate}

\end{enumerate}

%%%%%%%%%%%%%%%%%%%%%%%%%%%%%%%%%%%%%%%%%%%%%%%%%%%%%%%%%%%%

\newpage
\appendix

\section{Proof of Theorem}
\label{appendix:proof}

\universe*
\begin{proof}
We first show that if (a) and (b) hold, then $\text{MSE}(\hat{R}(g)) \geq \text{MSE}(\hat{R}^{wb}(g))$.
\begin{equation}
\begin{aligned}
\MoveEqLeft
\text{MSE}(\hat{R}(g))-\text{MSE}(\hat{R}^{wb}(g)) \\
&= \mathbb{E}[(\hat{R}(g)-R(g))^2] - \mathbb{E}[(\hat{R}^{wb}(g)-R(g))^2] \\
&= \mathbb{E}[(\hat{R}(g)-\hat{R}^{wb}(g))(\hat{R}(g)+\hat{R}^{wb}(g)-2R(g))] \\
&= 
\begin{multlined}[t]
4 \int_\mathcal{X} 
\left(\sum_{(i,j) \in I} \mathbf {1}_{J(\mathcal{X})}(i,j)\left(\hat{R}_{ij}(g) - \hat{R}_{ij}(g^\ast) + \epsilon\right) \right)
\left(\sum_{(i,j) \in I} \hat{R}_{ij}(g) - R_{ij}(g) \right.\\
\left.+ {1}_{J(\mathcal{X})}(i,j) \left(\hat{R}_{ij}(g^\ast)-\hat{R}_{ij}(g)- \epsilon \right) \vphantom{\sum_{(i,j) \in I}}\right)
\mathrm{d}F(\mathcal{X})
\end{multlined}\\
&=4\sum_{\substack{(i,j),(k,l)\in I \\\text{s.t.} (i,j)\neq(k,l)}} \int_\mathcal{X} 
\mathbf {1}_{J(\mathcal{X})}(i,j)\left(\hat{R}_{ij}(g) - \hat{R}_{ij}(g^\ast) + \epsilon\right)
\left(\hat{R}_{kl}(g) - R_{kl}(g) \right)\mathrm{d}F(\mathcal{X})\\
&+4 \int_\mathcal{X} \sum_{(i,j)\in J(\mathcal{X})}
\left(\hat{R}_{ij}(g) - \hat{R}_{ij}(g^\ast) + \epsilon\right)
\left(\hat{R}_{ij}(g^\ast) - R_{ij}(g) - \epsilon \right)\mathrm{d}F(\mathcal{X}).
\end{aligned}
\end{equation}
From the independence between $\hat{R}_{ij}(g)-\hat{R}_{ij}(g^\ast)$ and $\hat{R}_{kl}(g)$.
\begin{equation}
\begin{aligned}
&4\sum_{\substack{(i,j),(k,l)\in I \\\text{s.t.} (i,j)\neq(k,l)}} \int_\mathcal{X} 
\mathbf {1}_{J(\mathcal{X})}(i,j)\left(\hat{R}_{ij}(g) - \hat{R}_{ij}(g^\ast) + \epsilon\right)
\left(\hat{R}_{kl}(g) - R_{kl}(g) \right)\mathrm{d}F(\mathcal{X})\\
&=
4\sum_{\substack{(i,j),(k,l)\in I \\\text{s.t.} (i,j)\neq(k,l)}} \int_\mathcal{X} 
\mathbf {1}_{J(\mathcal{X})}(i,j)\left(\hat{R}_{ij}(g) - \hat{R}_{ij}(g^\ast) + \epsilon\right)\mathrm{d}F(\mathcal{X})
\int_\mathcal{X}\hat{R}_{kl}(g) - R_{kl}(g) \mathrm{d}F(\mathcal{X}) \\
&=
4\sum_{\substack{(i,j),(k,l)\in I \\\text{s.t.} (i,j)\neq(k,l)}} \left(\int_\mathcal{X} 
\mathbf {1}_{J(\mathcal{X})}(i,j)\left(\hat{R}_{ij}(g) - \hat{R}_{ij}(g^\ast) + \epsilon\right)\mathrm{d}F(\mathcal{X})
\right)\cdot 0 \\
&= 0.
\end{aligned}
\end{equation}
From (b) and by the definition of $J(\mathcal{X})$, $\hat{R}_{ij}(g) - \hat{R}_{ij}(g^\ast) + \epsilon \leq 0$ and $\hat{R}_{ij}(g^\ast) - R_{ij}(g) - \epsilon \leq 0$ for $(i,j) \in J(\mathcal{X})$ with probability 1. Therefore,
\begin{equation}
\begin{aligned}
\MoveEqLeft
\text{MSE}(\hat{R}(g))-\text{MSE}(\hat{R}^{wb}(g)) \\
&= 4 \int_\mathcal{X} \sum_{(i,j)\in J(\mathcal{X})}
\left(\hat{R}_{ij}(g) - \hat{R}_{ij}(g^\ast) + \epsilon\right)
\left(\hat{R}_{ij}(g^\ast) - R_{ij}(g) - \epsilon \right)\mathrm{d}F(\mathcal{X}) \\
&\geq 0.
\end{aligned}
\end{equation}

Assume that we have $\alpha>0$ such that $\alpha<R_{ij}(g) - \hat{R}_{ij}(g^\ast) + \epsilon$ \text{for all} $(i,j) \in J(\mathcal{X})$ with probability 1. Let $A$ be the event that $\alpha<R_{ij}(g) - \hat{R}_{ij}(g^\ast) + \epsilon$ \text{for all} $(i,j) \in J(\mathcal{X})$. Then,
\begin{equation}
\begin{aligned}
&\text{MSE}(\hat{R}(g))-\text{MSE}(\hat{R}^{wb}(g)) \\
&= 4 \sum_{(i,j)\in I} \int_{\mathcal{X}|A}
{1}_{J(\mathcal{X})}(i,j)\left(\hat{R}_{ij}(g) - \hat{R}_{ij}(g^\ast) + \epsilon\right)\left(\hat{R}_{ij}(g^\ast) - R_{ij}(g) - \epsilon \right)\mathrm{d}F(\mathcal{X}) \\
&= 4 \sum_{(i,j)\in I} \Pr[\alpha < \hat{R}_{ij}(g^\ast) - \hat{R}_{ij}(g) - \epsilon] \\
&\cdot\int_{\mathcal{X}|A, \alpha < \hat{R}_{ij}(g^\ast) - \hat{R}_{ij}(g) - \epsilon}
{1}_{J(\mathcal{X})}(i,j)\left(\hat{R}_{ij}(g) - \hat{R}_{ij}(g^\ast) + \epsilon\right)\left(\hat{R}_{ij}(g^\ast) - R_{ij}(g) - \epsilon \right)\mathrm{d}F(\mathcal{X}) \\
&\geq 4 \alpha^2 \sum_{(i,j)\in I} \Pr[\alpha < \hat{R}_{ij}(g^\ast) - \hat{R}_{ij}(g) - \epsilon].
\end{aligned}
\end{equation}

\end{proof}
\section{Pseudocode of WaveBound}
\label{appendix:pseudocode}
In Section~\ref{sec:method}, we describe how wave empirical risk can be obtained in mini-batched optimization process. In practice, WaveBound can be easily implemented by introducing an additional target network and slightly modifying the training objective in the optimization process as shown in Algorithm~\ref{alg:wavebound}.

\begin{algorithm}[H]
	\caption{\emph{WaveBound}}
	\textbf{Input:} source network $g_\theta$, target network $g_\tau$, training dataset $\mathcal{X}$, hyperparameters $\epsilon$ and $\tau$\\
    \textbf{Output:} target network $g_\tau$
	\begin{algorithmic}[1]
	\For {$t=1$ to $T$}
	\State $\mathcal{B} \leftarrow \{(x_i, y_i)\sim     \mathcal{X}\}_{i=1}^{B}$ \Comment{Sample a batch of size $B$}
	\For {$j=1$ to $M$}
	\For{$k=1$ to $K$}
	\For {$(x_i, y_i)\in \mathcal{B}$}
	\State $l_{ijk} \leftarrow \ell((g_\theta)_{jk}(x_i),(y_i)_{jk})$ \Comment{Compute the loss for $g_\theta$}
	\State $r_{ijk} \leftarrow \ell((g_\tau)_{jk}(x_i),(y_i)_{jk})$ \Comment{Compute the loss for $g_\tau$}
	\EndFor
	\EndFor
	\EndFor
	\State $\delta \theta \leftarrow \frac{1}{MK} \sum_{j,k} \partial_{\theta}{\left|\frac{1}{B}\sum_i l_{ijk} - \frac{1}{B}\sum_i r_{ijk} + \epsilon \right|}$ \Comment{Compute the loss gradient w.r.t. $\theta$}
	\State $\theta \leftarrow \text{optimizer}(\theta, \delta\theta)$ \Comment{update the source network parameter}
	\State $\tau \leftarrow \alpha\tau + (1-\alpha)\theta$ \Comment{update the target network parameter}
	\EndFor
\end{algorithmic} 
\label{alg:wavebound}
\end{algorithm}

\section{Detailed Experimental Settings}

In the experiments, WaveBound is applied as the regularization technique in Autoformer~\cite{WuXWL21}, Pyraformer~\cite{liu2022pyraformer}, Informer~\cite{ZhouZPZLXZ21}, LSTNet~\cite{LaiCYL18}, and TCN~\cite{abs-1803-01271} for the multivariate settings. For the univariate settings, N-BEATS~\cite{OreshkinCCB20} is additionally included. As described in Section~\ref{sec:experiments}, we use six real-world benchmarks: ETT, Electricity, Exchange, Traffic, Weather, and ILI.
In both the multivariate and univariate settings, we train baselines with the batch size of 32 and Adam optimizer with $\beta_1 = 0.9$ and $\beta_2 = 0.999$. The learning rate is searched in $\{0.00001, 0.00003, 0.0001, 0.0003, 0.001\}$ for each model. In all experiments, we perform early stopping on the validation set.
For WaveBound, the target network $g_\tau$ is updated with exponential moving average (EMA) of the source network $g_\theta$ with a decay rate $\alpha = 0.99$. When applying WaveBound to each baseline, the hyperparameter $\epsilon$ is searched in $\{0.01, 0.001\}$. We use a single TITAN RTX GPU in all experiments.

\section{Additional Experimental Results}
\label{appendix:additional_results}

\begin{figure}[t!]
\begin{center}
    \includegraphics[width=1.0\linewidth]{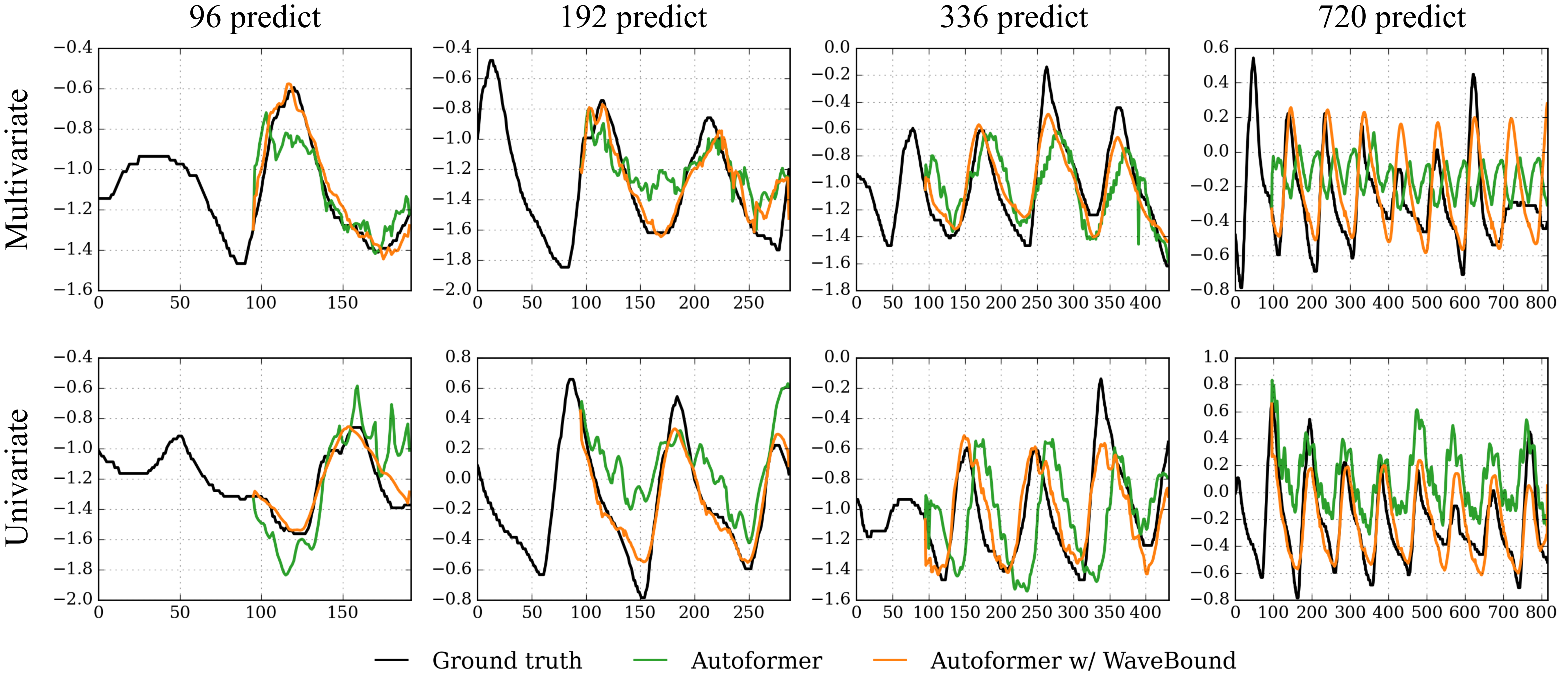}
\end{center}
    \caption{Qualitative results of Autoformer with and without WaveBound on the ETTm2 dataset. When using WaveBound, the results are relatively smoother and have less lag.}
\label{fig:qual_autoformer}
\end{figure}

\textbf{Quantitative Results.}
The additional experimental results of our proposed WaveBound method for full benchamrks and baselines are reported in Table~\ref{table:exp_full_ett_multi}, \ref{table:exp_full_multi}, and \ref{table:exp_full_uni}.

\textbf{Qualitative Results.}
To illustrate the effects of WaveBound, we depict the predictions of different models on ETTm2 dataset (96 input). For the multivariate setting, we plot the predictions for the last feature. As shown in Figure~\ref{fig:qual_autoformer}, the models trained with WaveBound consistently predict accurately regardless of forecast step.

\section{Effects of $\epsilon$ in WaveBound}

Table~\ref{table:epsilon_comparison} shows the results for different hyperparameter values of $\epsilon$ which are searched in $\{0.01, 0.001\}$. The performance can be improved by selecting proper $\epsilon$ in each setting.

\begin{table}[h!]
\begin{center}
\caption{Performance comparison under difference choices of hyperparameter $\epsilon\in\{0.01, 0.001\}$. We evaluate the performance of Autoformer on two different datasets ETT and Electricity.}
\label{table:epsilon_comparison}
\footnotesize
\setlength{\tabcolsep}{3pt}
\begin{tabular}[1.0\linewidth]{l c | cccc | cccc}
\toprule
\multicolumn{2}{l}{\multirow{2}{*}{}} &
\multicolumn{4}{c}{ETTm2} &
\multicolumn{4}{c}{Electricity} \\

\cmidrule(lr){3-6} \cmidrule(lr){7-10}

\multicolumn{2}{c}{} &
96 & 192 & 336 & 720 & 96 & 192 & 336 & 720 \\

\toprule

\multirow{2}{*}{$\epsilon = 0.01$} & 
MSE &
0.206 & 0.268 & 0.323 & 0.414 & 0.185 & 0.205 & 0.217 & 0.260 \\
& MAE &
0.290 & 0.327 & 0.361 & 0.412 & 0.300 & 0.317 & 0.327 & 0.359 \\

\midrule
\multirow{2}{*}{$\epsilon = 0.001$} &
MSE &
0.204 & 0.265 & 0.320 & 0.413 & 0.176 & 0.233 & 0.228 & 0.282 \\
& MAE &
0.285 & 0.322 & 0.356 & 0.408 & 0.288 & 0.333 & 0.331 & 0.379 \\

\bottomrule
\end{tabular}
\end{center}
\end{table}

\newpage
\section{Performance Analysis on Synthetic Dataset}

\textbf{Synthetic dataset.}
To analyze how WaveBound behaves in different noise levels and different dataset sizes, we synthesize a dataset composed of sine wave with the Gaussian noise. Formally, for a given noise level $\sigma$ and a time step $t$, the synthetic dataset is formed as: $f(t) = 2  \sin(2 \pi t / 32) + \sin(2 \pi t / 48) + \sigma z$, where $z$ follows the normal distribution.

\begin{figure}[t!]
\begin{center}
    \includegraphics[width=0.95\linewidth]{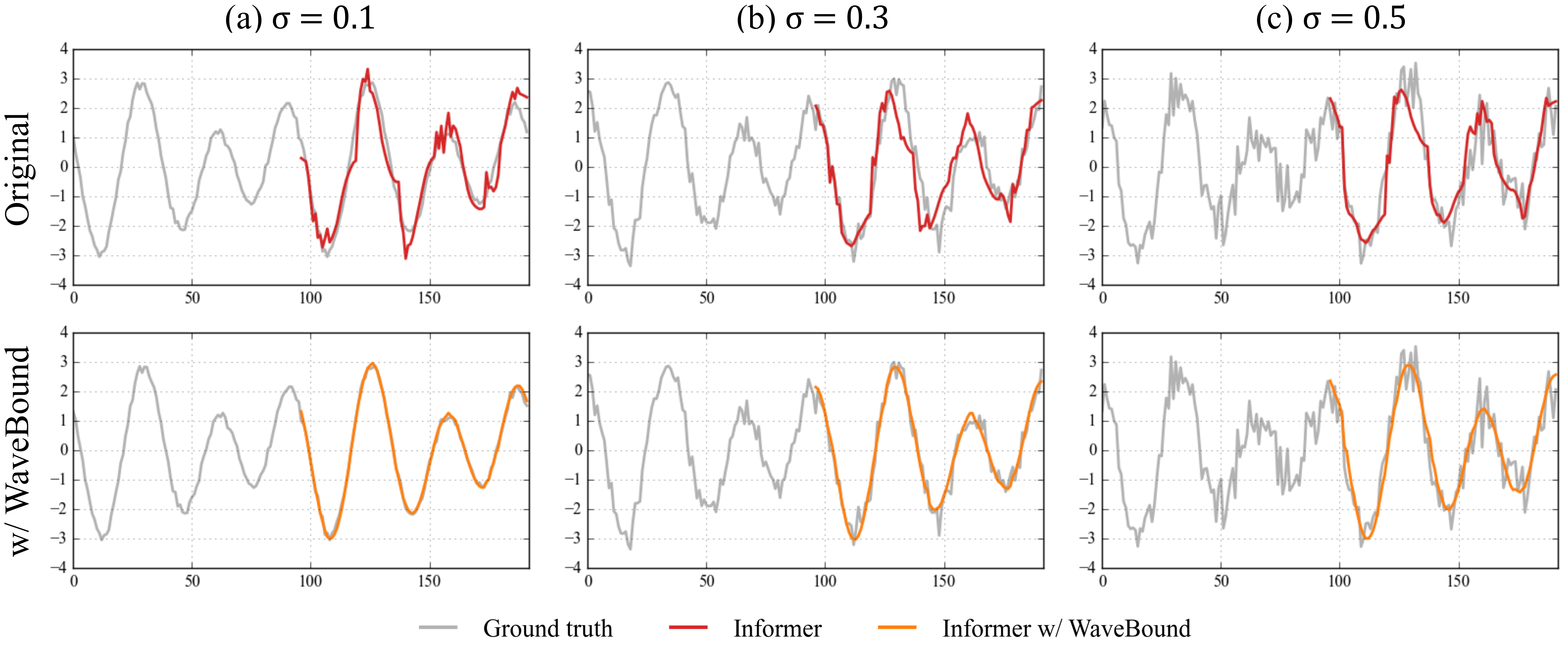}
\end{center}
\vspace{-0.1in}
\caption{Qualitative results of Informer trained with and without WaveBound on the synthetic dataset. As the noise level $\sigma$ increases, Informer (top) is highly affected by the noise and produces the unstable predictions. In contrast, the model with WaveBound (bottom) shows the relatively stable predictions and correctly estimate the trend of the signal regardless of the noise level in the dataset.}
\label{fig:noise_level}
\end{figure}
\textbf{Analysis on different noise levels and dataset sizes.}
In time series forecasting, the overfitting issue becomes significant when the train data is small and the data contains a high level of noise. Using the synthetic dataset, we examine the effectiveness of WaveBound in different levels of overfitting. Specifically, we examine Informer~\cite{ZhouZPZLXZ21} trained with and without WaveBound by varying the noise level $\sigma$ and the datset size $|\mathcal{D}|$ of the synthetic dataset. Table~\ref{table:noise_level} and Table~\ref{table:dataset_size} shows the performance of Informer with and without WaveBound in different noise levels and dataset sizes, respectively. As the noise level increases and dataset size decreases, Informer struggles with the unpredictable noise and produces noisy predictions, resulting in poor performance while WaveBound improves performance in all noise levels and dataset sizes. Figure~\ref{fig:noise_level} qualitatively shows that WaveBound allows the model to make stable predictions even in the high level of noise. 
\begin{table}[h!]
\begin{center}
\vspace{-0.1in}
\caption{Performance comparison under different noise levels $\sigma \in\{0.1, 0.3, 0.5\}$ with the fixed dataset size $|\mathcal{D}|=2000$. All results are averaged over 3 trials. Subscripts denote standard deviations.}
\label{table:noise_level}
\footnotesize
\setlength{\tabcolsep}{3pt}
\begin{tabular}[1.0\linewidth]{c c | c | c | c}
\toprule
\multicolumn{2}{l}{} &
\multicolumn{1}{c}{$\sigma = 0.1$} &
\multicolumn{1}{c}{$\sigma = 0.3$} &
\multicolumn{1}{c}{$\sigma = 0.5$} \\

\toprule

\multirow{2}{*}{Informer} & 
MSE &
\ocell[0.225]{0.024} & \ocell[0.316]{0.022} & \ocell[0.605]{0.061} \\
& MAE &
\ocell[0.363]{0.017} & \ocell[0.441]{0.013} & \ocell[0.613]{0.029} \\

\midrule

\multirow{2}{*}{\makecell{Informer \\ w/ WaveBound}} & 
MSE &
\obcell[0.015]{0.001} & \obcell[0.102]{0.001} &\obcell[0.281]{0.001} \\
& MAE &
\obcell[0.096]{0.003} & \obcell[0.252]{0.002} & \obcell[0.419]{0.000} \\

\bottomrule
\end{tabular}
\end{center}
\end{table}

\begin{table}[h!]
\begin{center}
\vspace{-0.1in}
\caption{Performance comparison under different dataset sizes $|\mathcal{D}| \in\{4000, 3000, 2000\}$, with the fixed noise level $\sigma = 0.5$. All results are averaged over 3 trials. Subscripts denote standard deviations.}
\label{table:dataset_size}
\footnotesize
\setlength{\tabcolsep}{3pt}
\begin{tabular}[1.0\linewidth]{c c | c | c | c}
\toprule
\multicolumn{2}{l}{} &
\multicolumn{1}{c}{$|\mathcal{D}| = 4000$} &
\multicolumn{1}{c}{$|\mathcal{D}| = 3000$} &
\multicolumn{1}{c}{$|\mathcal{D}| = 2000$} \\

\toprule

\multirow{2}{*}{Informer} & 
MSE &
\ocell[0.311]{0.004} & \ocell[0.399]{0.003} & \ocell[0.605]{0.061} \\
& MAE &
\ocell[0.443]{0.003} & \ocell[0.505]{0.002} & \ocell[0.613]{0.029} \\

\midrule

\multirow{2}{*}{\makecell{Informer \\ w/ WaveBound}} & 
MSE &
\obcell[0.246]{0.001} & \obcell[0.290]{0.002} & \obcell[0.281]{0.001} \\
& MAE &
\obcell[0.394]{0.000} & \obcell[0.435]{0.001} & \obcell[0.419]{0.000} \\

\bottomrule
\end{tabular}
\end{center}
\end{table}

\section{Impact of WaveBound on Individual Features and Time Steps}

\begin{figure}[p!]
\begin{center}
    \includegraphics[width=1.0\linewidth]{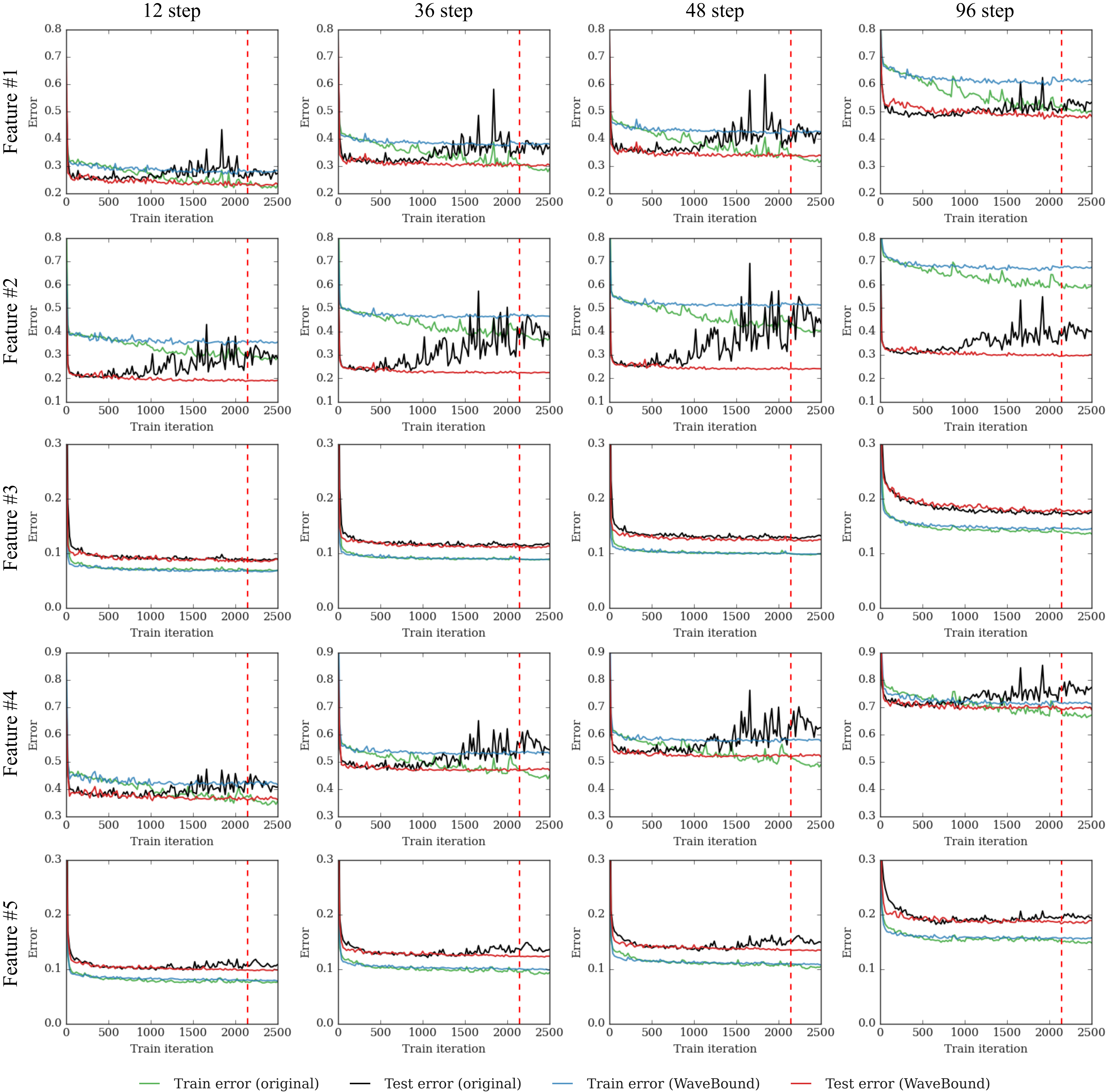}
\end{center}
\caption{Training curve for each feature and time step of Autoformer on the ETTm2 dataset. Each row and column denote the feature and time step of the prediction, respectively. The red dotted line denotes the iteration of best checkpoint found by early-stopping. As we hypothesized, the output variables for different features and time steps show different trends for their training curves, which indicates that the consideration for each value is essential to properly handle the overfitting issue in time series forecasting.}
\label{fig:train_curve_feature_timestep}
\end{figure}

To properly address an overfitting issue, WaveBound estimates adequate error bounds for each feature and time step individually. To justify the assumptions made by WaveBound, we depict the training curve of each feature and time step of the prediction made by Autoformer on the ETTm2 dataset. Each row and column in Figure~\ref{fig:train_curve_feature_timestep} denote one feature and one time step, respectively. As we assumed, we can observe that different features show different training dynamics, which indicates that the difficulty of prediction varies through the features. Moreover, the magnitude of error increases as the forecast step increases. Those observations highlight that different regularization strategies are required for different features and time steps.

We also observe that existing regularization methods such as early-stopping cannot properly handle the overfitting issue, as it considers only the aggregated value rather than considering each error individually. Red dotted lines in Figure~\ref{fig:train_curve_feature_timestep} denote the iteration of the best checkpoint found by early-stopping. Interestingly, even before the best checkpoint is decided by the early-stopping (red dotted lines), the model seems to be overfitted for several features; the train error (green line) is decreasing, but the test error (black line) is increasing. We conjecture that this is because of the property of the early-stopping, which considers only the mean value of error rather than individual errors of each feature and time step. In contrast, WaveBound retains the test loss (red lines) at a low level by preventing the train error (blue lines) from falling below a certain level for each feature and time step. In practice, this results in an overall performance improvement in time series forecasting, as shown in our extensive experiments.
\section{WaveBound with RevIN}

Reversible instance normalization (RevIN)~\cite{kim2021reversible} is recently proposed to mitigate the distribution shift problem in time series forecasting. In this section, we conduct the additional experiments by applying both WaveBound and RevIN in training. As shown in Table~\ref{table:WB_RevIN}, the performance of Autoformer~\cite{WuXWL21} and Informer~\cite{ZhouZPZLXZ21} is improved with a significant margin when both WaveBound and RevIN are applied.

\begin{table}[h!]
\begin{center}

\caption{The results of WaveBound with RevIN. All results are averaged over 3 trials. Subscripts denote standard deviations.}
\label{table:WB_RevIN}
\begin{adjustbox}{width=1\textwidth}
\scriptsize
\setlength{\tabcolsep}{3pt}
\begin{tabular}[1.0\linewidth]{l c | cccccc | cccccc}
\toprule
\multicolumn{2}{l|}{\multirow{2}{*}{Models}} &
\multicolumn{6}{c|}{Autoformer} &
\multicolumn{6}{c}{Informer} \\

&& 
\multicolumn{2}{c}{Origin} & \multicolumn{2}{c}{RevIN} & \multicolumn{2}{c|}{\makecell{\textbf{RevIN} \\ \textbf{w/ WaveBound}}} & 
\multicolumn{2}{c}{Origin} & \multicolumn{2}{c}{RevIN} & \multicolumn{2}{c}{\makecell{\textbf{RevIN} \\ \textbf{w/ WaveBound}}} \\
\cmidrule(lr){3-4} \cmidrule(lr){5-6} \cmidrule(lr){7-8} 
\cmidrule(lr){9-10} \cmidrule(lr){11-12} \cmidrule(lr){13-14}
\multicolumn{2}{l|}{Metric} &
MSE & MAE & MSE & MAE & MSE & MAE &
MSE & MAE & MSE & MAE & MSE & MAE \\

\toprule
\multirow{7}{*}{\rotatebox[origin=c]{90}{ETTm2}}
& 96 &
\cell[0.262]{0.037} & \cell[0.326]{0.014} & \cell[0.230]{0.007} & \cell[0.303]{0.006} & \textbf{\cell[0.208]{0.002}} & \textbf{\cell[0.282]{0.002}} &
\cell[0.376]{0.056} & \cell[0.477]{0.047} & \cell[0.270]{0.041} & \cell[0.326]{0.025} & \textbf{\cell[0.206]{0.000}} & \textbf{\cell[0.286]{0.001}} \\
& 192 &
\cell[0.284]{0.003} & \cell[0.342]{0.002} & \cell[0.291]{0.005} & \cell[0.338]{0.003} & \textbf{\cell[0.268]{0.001}} & \textbf{\cell[0.319]{0.001}} &
\cell[0.751]{0.020} & \cell[0.672]{0.004} & \cell[0.475]{0.078} & \cell[0.439]{0.033} & \textbf{\cell[0.293]{0.011}} & \textbf{\cell[0.341]{0.008}} \\
& 336 &
\cell[0.338]{0.007} & \cell[0.374]{0.005} & \cell[0.346]{0.004} & \cell[0.369]{0.004} & \textbf{\cell[0.329]{0.000}} & \textbf{\cell[0.357]{0.000}} &
\cell[1.440]{0.180} & \cell[0.917]{0.067} & \cell[0.517]{0.089} & \cell[0.467]{0.041} & \textbf{\cell[0.367]{0.005}} & \textbf{\cell[0.385]{0.003}} \\
& 720 &
\cell[0.446]{0.011} & \cell[0.435]{0.006} & \cell[0.435]{0.010} & \cell[0.419]{0.007} & \textbf{\cell[0.417]{0.003}} & \textbf{\cell[0.407]{0.001}} &
\cell[3.897]{0.562} & \cell[1.498]{0.128} & \cell[0.646]{0.075} & \cell[0.531]{0.034} & \textbf{\cell[0.476]{0.015}} & \textbf{\cell[0.442]{0.007}} \\
\midrule
\multirow{7}{*}{\rotatebox[origin=c]{90}{ECL}}
& 96 &
\cell[0.202]{0.003} & \cell[0.317]{0.003} & \cell[0.178]{0.001} & \cell[0.285]{0.001} & \textbf{\cell[0.173]{0.008}} & \textbf{\cell[0.276]{0.006}} &
\cell[0.335]{0.008} & \cell[0.417]{0.004} & \cell[0.196]{0.001} & \cell[0.303]{0.000} & \textbf{\cell[0.166]{0.001}} & \textbf{\cell[0.269]{0.001}} \\
& 192 &
\cell[0.235]{0.011} & \cell[0.340]{0.009} & \cell[0.218]{0.010} & \cell[0.317]{0.009} & \textbf{\cell[0.212]{0.017}} & \textbf{\cell[0.308]{0.015}} &
\cell[0.341]{0.013} & \cell[0.426]{0.011} & \cell[0.217]{0.004} & \cell[0.322]{0.004} & \textbf{\cell[0.181]{0.003}} & \textbf{\cell[0.283]{0.003}} \\
& 336 &
\cell[0.247]{0.011} & \cell[0.351]{0.009} & \cell[0.241]{0.016} & \cell[0.335]{0.013} & \textbf{\cell[0.223]{0.019}} & \textbf{\cell[0.318]{0.014}} &
\cell[0.369]{0.011} & \cell[0.448]{0.009} & \cell[0.236]{0.004} & \cell[0.339]{0.003} & \textbf{\cell[0.194]{0.002}} & \textbf{\cell[0.297]{0.003}} \\
& 720 &
\cell[0.270]{0.006} & \cell[0.371]{0.003} & \cell[0.259]{0.006} & \cell[0.349]{0.003} & \textbf{\cell[0.251]{0.021}} & \textbf{\cell[0.342]{0.017}} &
\cell[0.396]{0.009} & \cell[0.457]{0.005} & \cell[0.267]{0.001} & \cell[0.363]{0.001} & \textbf{\cell[0.217]{0.001}} & \textbf{\cell[0.317]{0.000}} \\
\bottomrule
\end{tabular}
\end{adjustbox}
\end{center}
\end{table}

\section{WaveBound in Short-Term Series Forecasting}

In this section, we examine WaveBound in short-term series forecasting. Specifically, we use Autoformer~\cite{WuXWL21} as the baseline and train the model to forecast 3, 6, and 12 steps on ECL, exchange rate, and traffic dataset. Table~\ref{table:ShortTerm} shows that WaveBound still achieves the performance improvements even in short-term series forecasting.

\begin{table}[h!]
\begin{center}

\caption{Short-term forecasting results of WaveBound in the multivariate setting. We use ECL, exchange rate and traffic dataset with three prediction lengths $L \in \{3, 6, 12\}$. All results are averaged over 3 trials. Subscripts denote standard deviations.}
\label{table:ShortTerm}
\footnotesize
\setlength{\tabcolsep}{3pt}
\begin{tabular}[1.0\linewidth]{l c | cccc}
\toprule
\multicolumn{2}{l|}{\multirow{2}{*}{Models}} &
\multicolumn{4}{c}{Autoformer} \\
&& 
\multicolumn{2}{c}{Origin} & \multicolumn{2}{c}{\textbf{w/ Ours}}\\
\cmidrule(lr){3-4} \cmidrule(lr){5-6}
\multicolumn{2}{l|}{Metric} &
MSE & MAE & MSE & MAE \\

\toprule
\multirow{3}{*}{\rotatebox[origin=c]{90}{ECL}}
& 3 &
\ocell[0.148]{0.001} & \ocell[0.274]{0.001} & \textbf{\ocell[0.133]{0.002}} & \textbf{\ocell[0.256]{0.002}} \\
& 6 &
\ocell[0.152]{0.001} & \ocell[0.277]{0.000} & \textbf{\ocell[0.140]{0.001}} & \textbf{\ocell[0.263]{0.001}} \\
& 12 &
\ocell[0.157]{0.001} & \ocell[0.282]{0.002} & \textbf{\ocell[0.144]{0.001}} & \textbf{\ocell[0.265]{0.001}} \\
\midrule
\multirow{3}{*}{\rotatebox[origin=c]{90}{\scriptsize Exchange}}
& 3 &
\ocell[0.034]{0.003} & \ocell[0.132]{0.005} & \textbf{\ocell[0.029]{0.009}} & \textbf{\ocell[0.121]{0.020}} \\
& 6 &
\textbf{\ocell[0.030]{0.003}} & \ocell[0.126]{0.007} & \textbf{\ocell[0.030]{0.007}} & \textbf{\ocell[0.124]{0.016}} \\
& 12 &
\ocell[0.039]{0.004} & \ocell[0.143]{0.007} & \textbf{\ocell[0.032]{0.005}} & \textbf{\ocell[0.129]{0.008}} \\
\midrule
\multirow{3}{*}{\rotatebox[origin=c]{90}{Traffic}}
& 3 &
\ocell[0.555]{0.005} & \ocell[0.385]{0.007} & \textbf{\ocell[0.520]{0.002}} & \textbf{\ocell[0.352]{0.002}} \\
& 6 &
\ocell[0.555]{0.001} & \ocell[0.377]{0.003} & \textbf{\ocell[0.531]{0.001}} & \textbf{\ocell[0.351]{0.003}} \\
& 12 &
\ocell[0.557]{0.004} & \ocell[0.375]{0.004} & \textbf{\ocell[0.530]{0.006}} & \textbf{\ocell[0.343]{0.004}} \\
\bottomrule
\end{tabular}
\end{center}
\end{table}

\section{WaveBound in Spatial-Temporal Time Series Forecasting}
% In our study, we primarily examine the effectiveness of our WaveBound method in Long
To examine the applicability of our WaveBound method in spatial-temporal time series forecasting, we examine Graph WaveNet~\cite{WuPLJZ19} trained with and without WaveBound on a widely used traffic dataset, METR-LA~\cite{LiYS018}. As in Wu~\etal~\cite{WuPLJZ19}, Graph WaveNet is trained with the mean absolute error and attempt to predict a future traffic speed of 1 hour given a historical data of 1 hour along with a graph structure. Table~\ref{table:spatial_temporal} shows that WaveBound improves the performance of Graph WaveNet in terms of all metrics, mean absolute error (MAE), root mean squared error (RMSE), and mean absolute precentage error (MAPE).

\begin{table}[h!]
\begin{center}
\caption{Spatial-temporal forecasting results of WaveBound adopted to Graph WaveNet. All results are averaged over 3 trials.}
\label{table:spatial_temporal}
\footnotesize
\setlength{\tabcolsep}{3pt}
\begin{tabular}[1.0\linewidth]{l ccc | ccc | ccc}
\toprule
\multirow{3}{*}{Models}  & \multicolumn{3}{c}{15min}   & \multicolumn{3}{c}{30min} & \multicolumn{3}{c}{60min} \\
\cmidrule{2-10}
& MAE & RMSE  & MAPE  &  MAE & RMSE  & MAPE  &  MAE & RMSE  & MAPE \\

\midrule\midrule

Graph WaveNet~\cite{WuPLJZ19} & 2.70 & 5.17 & 6.93\% & 3.10 & 6.24 & 8.37\% & 3.58 & 7.44 & 10.10\% \\ 
\textbf{+ WaveBound} & \textbf{2.67} & \textbf{5.11} & \textbf{6.91\%} & \textbf{3.03} & \textbf{6.10} & \textbf{8.32\%} & \textbf{3.45} & \textbf{7.15} & \textbf{9.90\%} \\ 

\bottomrule
\end{tabular}
\end{center}
\end{table}
\section{Ablation Study on EMA Methods}

Ablation study on the effect of EMA methods is shown in the Table~\ref{table:EMA_Ablation}. In the experiments, models only with EMA update (Without Bound) shows the slight performance improvements compared to the original model. However, EMA with dynamic error bound (WaveBound (Indiv.)) outperforms other baselines with a large margin, which concretely shows the effectiveness of individual error bounding in WaveBound.
\begin{table}[h!]
\begin{center}

\vspace{-0.1in}
\caption{Results of EMA methods on the ECL dataset. All results are averaged over 3
trials. Subscripts denote standard deviations.}
\label{table:EMA_Ablation}

\begin{adjustbox}{width=1\textwidth}
\scriptsize
\setlength{\tabcolsep}{3pt}
\begin{tabular}[1.0\linewidth]{l c | cccccccc}
\toprule
\multicolumn{2}{l|}{Models} & \multicolumn{2}{c}{Origin} & \multicolumn{2}{c}{Without Bound} & \multicolumn{2}{c}{WaveBound (Avg.)} & \multicolumn{2}{c}{\textbf{WaveBound (Indiv.)}} \\
\cmidrule(lr){1-2} \cmidrule(lr){3-4} \cmidrule(lr){5-6} \cmidrule(lr){7-8} \cmidrule(lr){9-10}
\multicolumn{2}{l|}{Metric} & MSE & MAE & MSE & MAE & MSE & MAE & MSE & MAE \\

\toprule
\multirow{2}{*}{Autoformer}
& 96 &
\ocell[0.202]{0.003} & \ocell[0.317]{0.003} & \ocell[0.193]{0.001} & \ocell[0.308]{0.001} &
\ocell[0.194]{0.001} & \ocell[0.309]{0.001} &  \textbf{\ocell[0.176]{0.003}} & \textbf{\ocell[0.288]{0.003}} \\
& 336 &
\ocell[0.247]{0.011} & \ocell[0.351]{0.009} & \ocell[0.224]{0.006} & \ocell[0.334]{0.003} &
\ocell[0.221]{0.009} & \ocell[0.331]{0.006} &  \textbf{\ocell[0.217]{0.006}} & \textbf{\ocell[0.327]{0.005}} \\
\midrule

\multirow{2}{*}{Pyraformer}
& 96 &
\ocell[0.256]{0.002} & \ocell[0.360]{0.001} & \ocell[0.247]{0.002} & \ocell[0.351]{0.002} &
\ocell[0.248]{0.002} & \ocell[0.352]{0.002} &  \textbf{\ocell[0.241]{0.001}} & \textbf{\ocell[0.345]{0.001}} \\
& 336 &
\ocell[0.278]{0.007} & \ocell[0.383]{0.006} & \ocell[0.287]{0.009} & \ocell[0.388]{0.010} &
\ocell[0.288]{0.011} & \ocell[0.388]{0.010} &  \textbf{\ocell[0.269]{0.005}} & \textbf{\ocell[0.371]{0.005}} \\
\midrule

\multirow{2}{*}{Informer}
& 96 &
\ocell[0.335]{0.008} & \ocell[0.417]{0.004} & \ocell[0.305]{0.005} & \ocell[0.391]{0.006} &
\ocell[0.302]{0.004} & \ocell[0.388]{0.003} &  \textbf{\ocell[0.289]{0.003}} & \textbf{\ocell[0.378]{0.002}} \\
& 336 &
\ocell[0.369]{0.011} & \ocell[0.448]{0.009} & \ocell[0.326]{0.017} & \ocell[0.411]{0.013} &
\ocell[0.322]{0.010} & \ocell[0.407]{0.008} &  \textbf{\ocell[0.305]{0.008}} & \textbf{\ocell[0.394]{0.008}} \\
\midrule

\multirow{2}{*}{LSTNet}
& 96 &
\ocell[0.268]{0.004} & \ocell[0.366]{0.003} & \ocell[0.201]{0.002} & \ocell[0.311]{0.003} &
\ocell[0.208]{0.010} & \ocell[0.314]{0.010} &  \textbf{\ocell[0.185]{0.003}} & \textbf{\ocell[0.291]{0.003}} \\
& 336 &
\ocell[0.284]{0.001} & \ocell[0.382]{0.002} & \ocell[0.241]{0.004} & \ocell[0.352]{0.004} &
\ocell[0.246]{0.009} & \ocell[0.356]{0.009} &  \textbf{\ocell[0.217]{0.004}} & \textbf{\ocell[0.326]{0.005}} \\
\bottomrule
\end{tabular}
\end{adjustbox}
\end{center}
\vspace{-0.1in}
\end{table}

\section{WaveBound with Small-Scale Model}

We also evaluate the performance of WaveBound with the small-scale model. Specifically, we train a multi layer perceptron (MLP) model consisting of three linear layers with and without WaveBound in the univariate setting. Table~\ref{table:MLP_UNI} shows that WaveBound improves the performance of this MLP model.

\begin{table}[h!]
\begin{center}

\caption{Results of the MLP model consisting of three linear layers with and without WaveBound in the univariate setting. All results are averaged over 3 trials. Subscripts denote standard deviations.}
\label{table:MLP_UNI}
\footnotesize
\setlength{\tabcolsep}{3pt}
\begin{tabular}[1.0\linewidth]{l c | cccc}
\toprule
\multicolumn{2}{l|}{Models} &
\multicolumn{2}{c}{Origin} & \multicolumn{2}{c}{\textbf{w/ Ours}}\\
\cmidrule(lr){3-4} \cmidrule(lr){5-6}
\multicolumn{2}{l|}{Metric} &
MSE & MAE & MSE & MAE \\

\toprule
\multirow{3}{*}{\rotatebox[origin=c]{90}{ETTm2}}
& 96 &
\ocell[0.071]{0.001} & \ocell[0.195]{0.002} & \textbf{\ocell[0.068]{0.000}} & \textbf{\ocell[0.191]{0.001}} \\
& 192 &
\ocell[0.105]{0.004} & \ocell[0.243]{0.005} & \textbf{\ocell[0.104]{0.000}} & \textbf{\ocell[0.241]{0.001}} \\
& 336 &
\ocell[0.141]{0.009} & \ocell[0.289]{0.010} & \textbf{\ocell[0.136]{0.001}} & \textbf{\ocell[0.284]{0.001}} \\
& 720 &
\ocell[0.207]{0.010} & \ocell[0.357]{0.009} & \textbf{\ocell[0.194]{0.002}} & \textbf{\ocell[0.343]{0.002}} \\
\midrule
\multirow{3}{*}{\rotatebox[origin=c]{90}{ECL}}
& 96 &
\ocell[0.325]{0.003} & \ocell[0.401]{0.001} & \textbf{\ocell[0.317]{0.003}} & \textbf{\ocell[0.392]{0.002}} \\
& 192 &
\ocell[0.334]{0.001} & \ocell[0.408]{0.001} & \textbf{\ocell[0.330]{0.004}} & \textbf{\ocell[0.401]{0.001}} \\
& 336 &
\ocell[0.389]{0.007} & \ocell[0.446]{0.007} & \textbf{\ocell[0.380]{0.002}} & \textbf{\ocell[0.436]{0.001}} \\
& 720 &
\ocell[0.441]{0.005} & \ocell[0.486]{0.002} & \textbf{\ocell[0.438]{0.002}} & \textbf{\ocell[0.481]{0.001}} \\
\bottomrule
\end{tabular}
\end{center}
\end{table}

\section{WaveBound with Generative Model}

To show the applicability of WaveBound when training a model with a loss function other than MSE we mainly considered in Section~\ref{sec:method}, we conduct an experiment with a generative model DeepAR~\cite{FlunkertSG17} which uses a negative log likelihood (NLL) as a training objective. Table \ref{table:Generative} shows that WaveBound improves the performance of DeepAR in the univariate setting.

\begin{table}[h!]
\begin{center}

\caption{Results of DeepAR with and without WaveBound. All results are averaged over 3 trials. Subscripts denote standard deviations.}
\label{table:Generative}

\footnotesize
\setlength{\tabcolsep}{3pt}
\begin{tabular}[1.0\linewidth]{l c | cccc}
\toprule
\multicolumn{2}{l|}{\multirow{2}{*}{Models}} &
\multicolumn{4}{c}{DeepAR} \\
&& 
\multicolumn{2}{c}{Origin} & \multicolumn{2}{c}{\textbf{w/ Ours}}\\
\cmidrule(lr){3-4} \cmidrule(lr){5-6}
\multicolumn{2}{l|}{Metric} &
NLL & MSE & MLL & MSE \\

\toprule
\multirow{3}{*}{\rotatebox[origin=c]{90}{ECL}}
& 8 &
\ocell[1.059]{0.152} & \ocell[0.447]{0.168} & \textbf{\ocell[0.935]{0.045}} & \textbf{\ocell[0.306]{0.001}} \\
& 16 &
\ocell[1.168]{0.101} & \ocell[0.500]{0.208} & \textbf{\ocell[1.148]{0.052}} & \textbf{\ocell[0.371]{0.036}} \\
& 24 &
\ocell[1.256]{0.073} & \ocell[0.617]{0.119} & \textbf{\ocell[1.180]{0.062}} & \textbf{\ocell[0.394]{0.043}} \\
\bottomrule
\end{tabular}
\end{center}
\end{table}

\section{Computational Cost Analysis}
WaveBound introduces additional computation and memory costs in the training phase as it employs the EMA model during training. In this section, we analyze the computation and memory cost of WaveBound on the ETTm2 dataset. The input and output lengths are fixed to 96 and a single TITAN RTX GPU is used for all experiments. Table~\ref{table:Computational_Cost} shows the computation time per epoch and maximum GPU memory occupied in the training phase. We also remark that WaveBound does not introduce the additional computation and memory costs in the inference phase.
\begin{table}[h!]
\begin{center}

\caption{Computation cost analysis of WaveBound.}
\label{table:Computational_Cost}

\footnotesize
\setlength{\tabcolsep}{3pt}
\begin{tabular}[1.0\linewidth]{l | cc | cc}
\toprule
\multicolumn{1}{l|}{} &
\multicolumn{2}{c|}{Training time (Sec / Epoch)} &
\multicolumn{2}{c}{GPU memory (GB)} \\

\cmidrule(lr){2-3} 
\cmidrule(lr){4-5} 
Model & Origin & WaveBound & Origin & WaveBound \\
\toprule
Autoformer & 67.722 & 84.075 & 2.13 & 2.20 \\
Pyraformer & 27.024 & 39.083 & 0.47 & 0.50 \\
Informer   & 75.251 & 97.154 & 1.80 & 1.86 \\
LSTNet     & 20.380 & 23.347 & 0.02 & 0.02 \\
\bottomrule
\end{tabular}
\end{center}
\vspace{-0.1in}
\end{table}

\section{Limitations}

While we observe the success of WaveBound with various baselines on real-world benchmarks, our method inevitably introduces the additional cost in the training procedure since our method utilizes the exponential moving average network for estimating the difficulty of predictions. 
Note that our method performs as the regularization technique in training and does not introduce any additional cost during inference.

\section{Potential Societal Impact}

Our proposed method does not introduce any foreseeable societal consequence. The real-world benchmarks used in this work are widely used for research purposes.

% tables
\begin{sidewaystable}
\begin{center}
\caption{Results of WaveBound in the multivariate setting. All results are averaged over 3 trials. Subscripts denote standard deviations.}
\label{table:exp_full_ett_multi}
\scriptsize
\setlength{\tabcolsep}{3pt}
\makebox[\textwidth]{
\begin{tabular}[1.0\linewidth]{l c | cccc | cccc | cccc | cccc | cccc}
\toprule
\multicolumn{2}{l|}{\multirow{2}{*}{Models}} &
\multicolumn{4}{c|}{Autoformer~\cite{WuXWL21}} &
\multicolumn{4}{c|}{Pyraformer~\cite{liu2022pyraformer}} &
\multicolumn{4}{c|}{Informer~\cite{ZhouZPZLXZ21}} &
\multicolumn{4}{c}{LSTNet~\cite{LaiCYL18}} &
\multicolumn{4}{c}{TCN~\cite{abs-1803-01271}} \\
&&
\multicolumn{2}{c}{Origin} & \multicolumn{2}{c|}{\textbf{w/ Ours}} &
\multicolumn{2}{c}{Origin} & \multicolumn{2}{c|}{\textbf{w/ Ours}} &
\multicolumn{2}{c}{Origin} & \multicolumn{2}{c|}{\textbf{w/ Ours}} &
\multicolumn{2}{c}{Origin} & \multicolumn{2}{c|}{\textbf{w/ Ours}} &
\multicolumn{2}{c}{Origin} & \multicolumn{2}{c}{\textbf{w/ Ours}}  \\
\cmidrule(lr){3-4} \cmidrule(lr){5-6}
\cmidrule(lr){7-8} \cmidrule(lr){9-10}
\cmidrule(lr){11-12} \cmidrule(lr){13-14}
\cmidrule(lr){15-16} \cmidrule(lr){17-18}
\cmidrule(lr){19-20} \cmidrule(lr){21-22}
\multicolumn{2}{l|}{Metric} &
MSE & MAE & MSE & MAE &
MSE & MAE & MSE & MAE &
MSE & MAE & MSE & MAE &
MSE & MAE & MSE & MAE &
MSE & MAE & MSE & MAE \\
\toprule
\multirow{7}{*}{\rotatebox[origin=c]{90}{ETTh1}}
& 96 &
\cell[0.443]{0.019} & \cell[0.453]{0.012} & \bcell[0.395]{0.011} & \bcell[0.423]{0.007} &
\cell[0.637]{0.017} & \bcell[0.592]{0.007} & \bcell[0.630]{0.011} & \cell[0.598]{0.013} &
\cell[0.886]{0.020} & \cell[0.730]{0.014} & \bcell[0.841]{0.013} & \bcell[0.708]{0.010} &
\cell[0.662]{0.022} & \cell[0.588]{0.009} & \bcell[0.660]{0.007} & \bcell[0.586]{0.002} &
\cell[0.704]{0.095} & \cell[0.642]{0.056} & \bcell[0.692]{0.009} & \bcell[0.638]{0.005} \\
& 192 &
\cell[0.509]{0.032} & \cell[0.486]{0.020} & \bcell[0.433]{0.002} & \bcell[0.444]{0.001} &
\cell[0.814]{0.033} & \bcell[0.708]{0.020} & \bcell[0.807]{0.050} & \bcell[0.708]{0.034} &
\cell[0.988]{0.014} & \cell[0.781]{0.009} & \bcell[0.959]{0.015} & \bcell[0.755]{0.010} &
\cell[0.771]{0.008} & \cell[0.652]{0.008} & \bcell[0.716]{0.004} & \bcell[0.617]{0.001} &
\cell[0.831]{0.027} & \cell[0.722]{0.019} & \bcell[0.830]{0.003} & \bcell[0.711]{0.002} \\
& 336 &
\cell[0.519]{0.024} & \cell[0.502]{0.017} & \bcell[0.470]{0.005} & \bcell[0.476]{0.004} &
\cell[0.903]{0.024} & \cell[0.749]{0.014} & \bcell[0.859]{0.026} & \bcell[0.728]{0.016} &
\cell[1.159]{0.073} & \cell[0.861]{0.027} & \bcell[1.097]{0.040} & \bcell[0.826]{0.019} &
\cell[0.882]{0.004} & \cell[0.712]{0.002} & \bcell[0.782]{0.004} & \bcell[0.664]{0.007} &
\bcell[0.894]{0.016} & \bcell[0.739]{0.007} & \cell[0.935]{0.004} & \cell[0.758]{0.003} \\
& 720 &
\cell[0.516]{0.013} & \cell[0.517]{0.009} & \bcell[0.503]{0.017} & \bcell[0.506]{0.010} &
\cell[0.932]{0.007} & \cell[0.767]{0.004} & \bcell[0.919]{0.011} & \bcell[0.763]{0.006} &
\cell[1.178]{0.020} & \bcell[0.861]{0.006} & \bcell[1.164]{0.002} & \cell[0.867]{0.003} &
\cell[1.030]{0.034} & \cell[0.804]{0.013} & \bcell[0.856]{0.017} & \bcell[0.721]{0.010} &
\cell[1.139]{0.030} & \cell[0.824]{0.013} & \bcell[1.067]{0.012} & \bcell[0.793]{0.004} \\
\midrule
\multirow{7}{*}{\rotatebox[origin=c]{90}{ETTh2}}
& 96 &
\cell[0.392]{0.033} & \cell[0.417]{0.016} & \bcell[0.338]{0.002} & \bcell[0.379]{0.002} &
\cell[1.453]{0.254} & \cell[0.938]{0.083} & \bcell[1.269]{0.015} & \bcell[0.912]{0.007} &
\cell[2.967]{0.296} & \cell[1.349]{0.068} & \bcell[2.261]{0.357} & \bcell[1.178]{0.098} &
\cell[1.189]{0.017} & \cell[0.876]{0.018} & \bcell[1.011]{0.107} & \bcell[0.767]{0.057} &
\cell[3.038]{0.217} & \cell[1.399]{0.059} & \bcell[2.921]{0.203} & \bcell[1.370]{0.040} \\
& 192 &
\cell[0.467]{0.023} & \cell[0.459]{0.013} & \bcell[0.420]{0.005} & \bcell[0.428]{0.003} &
\cell[5.147]{0.708} & \cell[1.822]{0.137} & \bcell[2.072]{0.075} & \bcell[1.172]{0.029} &
\cell[5.822]{0.644} & \cell[1.999]{0.121} & \bcell[5.361]{0.646} & \bcell[1.929]{0.115} &
\cell[2.227]{0.436} & \cell[1.193]{0.160} & \bcell[1.612]{0.075} & \bcell[0.991]{0.014} &
\cell[6.275]{0.909} & \cell[2.096]{0.170} & \bcell[5.060]{0.762} & \bcell[1.865]{0.171} \\
& 336 &
\cell[0.472]{0.004} & \cell[0.475]{0.003} & \bcell[0.452]{0.003} & \bcell[0.459]{0.002} &
\cell[4.381]{0.188} & \cell[1.736]{0.048} & \bcell[2.299]{0.094} & \bcell[1.273]{0.041} &
\cell[4.926]{0.260} & \bcell[1.864]{0.051} & \bcell[4.700]{0.181} & \cell[1.878]{0.046} &
\cell[2.274]{0.111} & \cell[1.235]{0.030} & \bcell[1.697]{0.207} & \bcell[1.041]{0.094} &
\cell[6.887]{0.936} & \cell[2.245]{0.149} & \bcell[4.263]{0.291} & \bcell[1.747]{0.073} \\
& 720 &
\cell[0.485]{0.010} & \cell[0.496]{0.009} & \bcell[0.454]{0.009} & \bcell[0.468]{0.003} &
\cell[4.304]{0.530} & \cell[1.764]{0.110} & \bcell[2.810]{0.098} & \bcell[1.458]{0.033} &
\cell[3.885]{0.351} & \cell[1.667]{0.089} & \bcell[3.791]{0.168} & \bcell[1.642]{0.057} &
\cell[2.447]{0.079} & \cell[1.286]{0.020} & \bcell[1.861]{0.062} & \bcell[1.114]{0.027} &
\cell[8.993]{2.254} & \cell[2.355]{0.321} & \bcell[3.299]{0.073} & \bcell[1.504]{0.010} \\
\midrule
\multirow{7}{*}{\rotatebox[origin=c]{90}{ETTm1}}
& 96 &
\cell[0.491]{0.028} & \cell[0.467]{0.013} & \bcell[0.385]{0.002} & \bcell[0.416]{0.003} &
\bcell[0.480]{0.015} & \bcell[0.490]{0.010} & \cell[0.495]{0.017} & \cell[0.494]{0.009} &
\cell[0.654]{0.044} & \cell[0.576]{0.016} & \bcell[0.590]{0.008} & \bcell[0.543]{0.002} &
\cell[0.607]{0.010} & \cell[0.541]{0.009} & \bcell[0.508]{0.021} & \bcell[0.484]{0.009} &
\cell[0.603]{0.031} & \cell[0.558]{0.013} & \bcell[0.556]{0.008} & \bcell[0.531]{0.006} \\
& 192 &
\cell[0.571]{0.013} & \cell[0.504]{0.008} & \bcell[0.500]{0.002} & \bcell[0.472]{0.001} &
\cell[0.572]{0.021} & \cell[0.544]{0.021} & \bcell[0.525]{0.013} & \bcell[0.523]{0.008} &
\cell[0.783]{0.050} & \cell[0.658]{0.023} & \bcell[0.672]{0.012} & \bcell[0.595]{0.011} &
\cell[0.615]{0.019} & \cell[0.559]{0.007} & \bcell[0.563]{0.007} & \bcell[0.525]{0.002} &
\cell[0.702]{0.014} & \cell[0.610]{0.002} & \bcell[0.632]{0.009} & \bcell[0.577]{0.006} \\
& 336 &
\cell[0.611]{0.029} & \cell[0.526]{0.008} & \bcell[0.497]{0.047} & \bcell[0.475]{0.015} &
\cell[0.750]{0.081} & \cell[0.650]{0.031} & \bcell[0.629]{0.023} & \bcell[0.592]{0.012} &
\cell[1.097]{0.046} & \cell[0.810]{0.024} & \bcell[0.889]{0.016} & \bcell[0.724]{0.009} &
\cell[0.657]{0.009} & \cell[0.593]{0.006} & \bcell[0.652]{0.053} & \bcell[0.581]{0.015} &
\cell[0.881]{0.034} & \cell[0.705]{0.012} & \bcell[0.749]{0.005} & \bcell[0.643]{0.005} \\
& 720 &
\cell[0.605]{0.031} & \cell[0.527]{0.010} & \bcell[0.507]{0.044} & \bcell[0.488]{0.017} &
\cell[0.949]{0.022} & \cell[0.749]{0.005} & \bcell[0.806]{0.052} & \bcell[0.691]{0.028} &
\cell[1.277]{0.036} & \cell[0.855]{0.010} & \bcell[1.095]{0.022} & \bcell[0.797]{0.004} &
\cell[0.785]{0.050} & \cell[0.667]{0.022} & \bcell[0.720]{0.013} & \bcell[0.628]{0.015} &
\cell[1.047]{0.023} & \cell[0.767]{0.010} & \bcell[0.910]{0.008} & \bcell[0.714]{0.003} \\
\midrule
\multirow{7}{*}{\rotatebox[origin=c]{90}{ETTm2}}
& 96 &
\cell[0.262]{0.037} & \cell[0.326]{0.014} & \bcell[0.204]{0.001} & \bcell[0.285]{0.001} &
\cell[0.363]{0.089} & \cell[0.451]{0.064} & \bcell[0.281]{0.026} & \bcell[0.386]{0.021} &
\cell[0.376]{0.056} & \cell[0.477]{0.047} & \bcell[0.334]{0.025} & \bcell[0.429]{0.026} &
\cell[0.455]{0.096} & \cell[0.511]{0.067} & \bcell[0.268]{0.006} & \bcell[0.368]{0.006} &
\cell[0.537]{0.128} & \cell[0.555]{0.080} & \bcell[0.480]{0.039} & \bcell[0.524]{0.020} \\
& 192 &
\cell[0.284]{0.003} & \cell[0.342]{0.002} & \bcell[0.265]{0.002} & \bcell[0.322]{0.002} &
\cell[0.708]{0.044} & \cell[0.648]{0.023} & \bcell[0.624]{0.011} & \bcell[0.599]{0.006} &
\cell[0.751]{0.020} & \cell[0.672]{0.004} & \bcell[0.698]{0.056} & \bcell[0.631]{0.013} &
\cell[0.706]{0.102} & \cell[0.660]{0.064} & \bcell[0.464]{0.027} & \bcell[0.508]{0.020} &
\cell[0.928]{0.110} & \cell[0.745]{0.054} & \bcell[0.812]{0.030} & \bcell[0.708]{0.013} \\
& 336 &
\cell[0.338]{0.007} & \cell[0.374]{0.005} & \bcell[0.320]{0.001} & \bcell[0.356]{0.001} &
\cell[1.130]{0.086} & \cell[0.846]{0.026} & \bcell[1.072]{0.016} & \bcell[0.829]{0.008} &
\cell[1.440]{0.180} & \cell[0.917]{0.067} & \bcell[1.087]{0.064} & \bcell[0.845]{0.026} &
\cell[1.161]{0.066} & \cell[0.868]{0.049} & \bcell[0.781]{0.012} & \bcell[0.695]{0.009} &
\cell[1.118]{0.097} & \cell[0.816]{0.037} & \bcell[1.015]{0.104} & \bcell[0.775]{0.057} \\
& 720 &
\cell[0.446]{0.011} & \cell[0.435]{0.006} & \bcell[0.413]{0.001} & \bcell[0.408]{0.000} &
\cell[2.995]{0.095} & \cell[1.386]{0.046} & \bcell[1.917]{0.074} & \bcell[1.119]{0.028} &
\cell[3.897]{0.562} & \cell[1.498]{0.128} & \bcell[2.984]{0.127} & \bcell[1.411]{0.042} &
\cell[3.288]{0.102} & \cell[1.494]{0.048} & \bcell[2.312]{0.358} & \bcell[1.239]{0.105} &
\cell[2.687]{0.302} & \cell[1.231]{0.078} & \bcell[2.120]{0.042} & \bcell[1.092]{0.011} \\
\bottomrule
\end{tabular}
}
\end{center}
\end{sidewaystable}
\begin{sidewaystable}
\begin{center}
\caption{Results of WaveBound in the multivariate setting. All results are averaged over 3 trials. Subscripts denote standard deviations.}
\label{table:exp_full_multi}
\scriptsize
\setlength{\tabcolsep}{3pt}
\makebox[\textwidth]{
\begin{tabular}[1.0\linewidth]{l c | cccc | cccc | cccc | cccc | cccc}
\toprule
\multicolumn{2}{l|}{\multirow{2}{*}{Models}} &
\multicolumn{4}{c|}{Autoformer~\cite{WuXWL21}} &
\multicolumn{4}{c|}{Pyraformer~\cite{liu2022pyraformer}} &
\multicolumn{4}{c|}{Informer~\cite{ZhouZPZLXZ21}} &
\multicolumn{4}{c}{LSTNet~\cite{LaiCYL18}} &
\multicolumn{4}{c}{TCN~\cite{abs-1803-01271}} \\
&&
\multicolumn{2}{c}{Origin} & \multicolumn{2}{c|}{\textbf{w/ Ours}} &
\multicolumn{2}{c}{Origin} & \multicolumn{2}{c|}{\textbf{w/ Ours}} &
\multicolumn{2}{c}{Origin} & \multicolumn{2}{c|}{\textbf{w/ Ours}} &
\multicolumn{2}{c}{Origin} & \multicolumn{2}{c|}{\textbf{w/ Ours}} &
\multicolumn{2}{c}{Origin} & \multicolumn{2}{c}{\textbf{w/ Ours}}  \\
\cmidrule(lr){3-4} \cmidrule(lr){5-6}
\cmidrule(lr){7-8} \cmidrule(lr){9-10}
\cmidrule(lr){11-12} \cmidrule(lr){13-14}
\cmidrule(lr){15-16} \cmidrule(lr){17-18}
\cmidrule(lr){19-20} \cmidrule(lr){21-22}
\multicolumn{2}{l|}{Metric} &
MSE & MAE & MSE & MAE &
MSE & MAE & MSE & MAE &
MSE & MAE & MSE & MAE &
MSE & MAE & MSE & MAE &
MSE & MAE & MSE & MAE \\
\toprule
\multirow{7}{*}{\rotatebox[origin=c]{90}{Electricity}}
& 96 &
\cell[0.202]{0.003} & \cell[0.317]{0.003} & \bcell[0.176]{0.003} & \bcell[0.288]{0.003} &
\cell[0.256]{0.002} & \cell[0.360]{0.001} & \bcell[0.241]{0.001} & \bcell[0.345]{0.001} &
\cell[0.335]{0.008} & \cell[0.417]{0.004} & \bcell[0.289]{0.003} & \bcell[0.378]{0.002} &
\cell[0.268]{0.004} & \cell[0.366]{0.003} & \bcell[0.185]{0.003} & \bcell[0.291]{0.003} &
\cell[0.268]{0.003} & \cell[0.363]{0.003} & \bcell[0.260]{0.002} & \bcell[0.353]{0.002} \\
& 192 &
\cell[0.235]{0.011} & \cell[0.340]{0.009} & \bcell[0.205]{0.003} & \bcell[0.317]{0.004} &
\cell[0.272]{0.003} & \cell[0.378]{0.003} & \bcell[0.256]{0.002} & \bcell[0.360]{0.002} &
\cell[0.341]{0.013} & \cell[0.426]{0.011} & \bcell[0.298]{0.002} & \bcell[0.388]{0.002} &
\cell[0.277]{0.002} & \cell[0.375]{0.001} & \bcell[0.197]{0.003} & \bcell[0.304]{0.003} &
\cell[0.295]{0.006} & \cell[0.385]{0.005} & \bcell[0.272]{0.003} & \bcell[0.358]{0.002} \\
& 336 &
\cell[0.247]{0.011} & \cell[0.351]{0.009} & \bcell[0.217]{0.006} & \bcell[0.327]{0.005} &
\cell[0.278]{0.007} & \cell[0.383]{0.006} & \bcell[0.269]{0.005} & \bcell[0.371]{0.005} &
\cell[0.369]{0.011} & \cell[0.448]{0.009} & \bcell[0.305]{0.008} & \bcell[0.394]{0.008} &
\cell[0.284]{0.001} & \cell[0.382]{0.002} & \bcell[0.217]{0.004} & \bcell[0.326]{0.005} &
\cell[0.309]{0.013} & \cell[0.394]{0.009} & \bcell[0.284]{0.003} & \bcell[0.365]{0.002} \\
& 720 &
\cell[0.270]{0.006} & \cell[0.371]{0.003} & \bcell[0.260]{0.024} & \bcell[0.359]{0.018} &
\cell[0.291]{0.002} & \cell[0.385]{0.001} & \bcell[0.283]{0.003} & \bcell[0.377]{0.003} &
\cell[0.396]{0.009} & \cell[0.457]{0.005} & \bcell[0.311]{0.008} & \bcell[0.398]{0.008} &
\cell[0.316]{0.002} & \cell[0.404]{0.001} & \bcell[0.250]{0.001} & \bcell[0.350]{0.002} &
\cell[0.310]{0.002} & \cell[0.388]{0.002} & \bcell[0.290]{0.003} & \bcell[0.369]{0.001} \\
\midrule
\multirow{7}{*}{\rotatebox[origin=c]{90}{Exchange}}
& 96 &
\cell[0.153]{0.009} & \cell[0.285]{0.009} & \bcell[0.146]{0.005} & \bcell[0.274]{0.007} &
\bcell[0.604]{0.037} & \bcell[0.624]{0.016} & \cell[0.615]{0.037} & \cell[0.627]{0.015} &
\cell[0.979]{0.045} & \cell[0.791]{0.021} & \bcell[0.878]{0.053} & \bcell[0.765]{0.032} &
\cell[0.483]{0.030} & \cell[0.518]{0.017} & \bcell[0.357]{0.046} & \bcell[0.432]{0.028} &
\cell[0.805]{0.116} & \cell[0.707]{0.060} & \bcell[0.720]{0.023} & \bcell[0.693]{0.014} \\
& 192 &
\cell[0.297]{0.030} & \cell[0.397]{0.019} & \bcell[0.262]{0.001} & \bcell[0.373]{0.001} &
\cell[0.982]{0.093} & \cell[0.806]{0.029} & \bcell[0.953]{0.076} & \bcell[0.797]{0.031} &
\cell[1.147]{0.110} & \bcell[0.854]{0.042} & \bcell[1.136]{0.021} & \cell[0.859]{0.014} &
\cell[0.706]{0.042} & \cell[0.646]{0.027} & \bcell[0.621]{0.210} & \bcell[0.593]{0.111} &
\cell[0.945]{0.079} & \cell[0.795]{0.042} & \bcell[0.912]{0.035} & \bcell[0.781]{0.020} \\
& 336 &
\cell[0.438]{0.018} & \cell[0.490]{0.011} & \bcell[0.425]{0.001} & \bcell[0.483]{0.001} &
\cell[1.264]{0.131} & \bcell[0.934]{0.052} & \bcell[1.263]{0.043} & \cell[0.944]{0.014} &
\cell[1.592]{0.067} & \cell[1.014]{0.016} & \bcell[1.461]{0.051} & \bcell[0.992]{0.014} &
\cell[1.055]{0.124} & \cell[0.800]{0.049} & \bcell[0.837]{0.055} & \bcell[0.691]{0.027} &
\cell[1.350]{0.105} & \cell[0.962]{0.035} & \bcell[1.135]{0.045} & \bcell[0.889]{0.023} \\
& 720 &
\cell[1.207]{0.080} & \cell[0.860]{0.034} & \bcell[1.088]{0.004} & \bcell[0.810]{0.002} &
\cell[1.663]{0.083} & \cell[1.051]{0.019} & \bcell[1.562]{0.097} & \bcell[1.016]{0.012} &
\cell[2.540]{0.089} & \cell[1.306]{0.027} & \bcell[2.496]{0.253} & \bcell[1.294]{0.071} &
\cell[2.198]{0.656} & \cell[1.127]{0.189} & \bcell[1.374]{0.139} & \bcell[0.894]{0.040} &
\cell[2.061]{0.226} & \cell[1.152]{0.076} & \bcell[1.689]{0.162} & \bcell[1.067]{0.045} \\
\midrule
\multirow{7}{*}{\rotatebox[origin=c]{90}{Traffic}}
& 96 &
\cell[0.645]{0.011} & \cell[0.399]{0.012} & \bcell[0.596]{0.027} & \bcell[0.352]{0.013} &
\cell[0.635]{0.006} & \cell[0.364]{0.006} & \bcell[0.622]{0.003} & \bcell[0.341]{0.003} &
\cell[0.731]{0.010} & \cell[0.412]{0.001} & \bcell[0.671]{0.007} & \bcell[0.364]{0.004} &
\cell[0.735]{0.008} & \cell[0.446]{0.003} & \bcell[0.587]{0.004} & \bcell[0.356]{0.003} &
\cell[0.645]{0.007} & \cell[0.361]{0.005} & \bcell[0.633]{0.004} & \bcell[0.333]{0.003} \\
& 192 &
\cell[0.644]{0.027} & \cell[0.407]{0.021} & \bcell[0.607]{0.013} & \bcell[0.370]{0.014} &
\cell[0.658]{0.006} & \cell[0.376]{0.005} & \bcell[0.646]{0.001} & \bcell[0.355]{0.000} &
\cell[0.751]{0.007} & \cell[0.422]{0.007} & \bcell[0.666]{0.005} & \bcell[0.360]{0.005} &
\cell[0.750]{0.001} & \cell[0.446]{0.000} & \bcell[0.595]{0.005} & \bcell[0.365]{0.004} &
\cell[0.646]{0.001} & \cell[0.354]{0.001} & \bcell[0.625]{0.002} & \bcell[0.320]{0.001} \\
& 336 &
\cell[0.625]{0.021} & \cell[0.390]{0.019} & \bcell[0.603]{0.019} & \bcell[0.361]{0.017} &
\cell[0.668]{0.003} & \cell[0.377]{0.001} & \bcell[0.653]{0.002} & \bcell[0.355]{0.002} &
\cell[0.822]{0.003} & \cell[0.465]{0.003} & \bcell[0.709]{0.014} & \bcell[0.387]{0.009} &
\cell[0.778]{0.002} & \cell[0.455]{0.000} & \bcell[0.623]{0.019} & \bcell[0.378]{0.014} &
\cell[0.661]{0.005} & \cell[0.363]{0.003} & \bcell[0.630]{0.001} & \bcell[0.320]{0.001} \\
& 720 &
\cell[0.650]{0.008} & \cell[0.398]{0.004} & \bcell[0.642]{0.011} & \bcell[0.383]{0.007} &
\cell[0.698]{0.002} & \cell[0.390]{0.001} & \bcell[0.672]{0.001} & \bcell[0.364]{0.002} &
\cell[0.957]{0.038} & \cell[0.539]{0.024} & \bcell[0.764]{0.025} & \bcell[0.421]{0.014} &
\cell[0.815]{0.002} & \cell[0.470]{0.002} & \bcell[0.648]{0.008} & \bcell[0.383]{0.009} &
\cell[0.655]{0.001} & \cell[0.358]{0.001} & \bcell[0.634]{0.000} & \bcell[0.322]{0.000} \\
\midrule
\multirow{7}{*}{\rotatebox[origin=c]{90}{Weather}}
& 96 &
\cell[0.294]{0.014} & \cell[0.355]{0.011} & \bcell[0.227]{0.008} & \bcell[0.296]{0.010} &
\cell[0.235]{0.010} & \cell[0.321]{0.010} & \bcell[0.193]{0.007} & \bcell[0.272]{0.005} &
\cell[0.378]{0.029} & \cell[0.428]{0.016} & \bcell[0.355]{0.014} & \bcell[0.415]{0.009} &
\cell[0.237]{0.004} & \cell[0.310]{0.002} & \bcell[0.202]{0.007} & \bcell[0.275]{0.004} &
\cell[0.447]{0.008} & \cell[0.481]{0.008} & \bcell[0.412]{0.013} & \bcell[0.457]{0.010} \\
& 192 &
\cell[0.308]{0.004} & \cell[0.368]{0.005} & \bcell[0.283]{0.005} & \bcell[0.340]{0.005} &
\cell[0.340]{0.041} & \cell[0.415]{0.032} & \bcell[0.306]{0.015} & \bcell[0.372]{0.006} &
\cell[0.462]{0.012} & \cell[0.467]{0.002} & \bcell[0.424]{0.003} & \bcell[0.448]{0.007} &
\cell[0.277]{0.003} & \cell[0.343]{0.003} & \bcell[0.254]{0.000} & \bcell[0.316]{0.002} &
\bcell[0.604]{0.023} & \bcell[0.558]{0.012} & \cell[0.618]{0.006} & \bcell[0.558]{0.008} \\
& 336 &
\cell[0.364]{0.007} & \cell[0.396]{0.008} & \bcell[0.335]{0.013} & \bcell[0.370]{0.010} &
\cell[0.453]{0.038} & \cell[0.484]{0.028} & \bcell[0.403]{0.055} & \bcell[0.441]{0.037} &
\cell[0.575]{0.010} & \cell[0.535]{0.005} & \bcell[0.506]{0.017} & \bcell[0.484]{0.009} &
\cell[0.326]{0.007} & \cell[0.378]{0.004} & \bcell[0.309]{0.015} & \bcell[0.358]{0.012} &
\cell[0.753]{0.017} & \cell[0.632]{0.007} & \bcell[0.689]{0.049} & \bcell[0.587]{0.024} \\
& 720 &
\cell[0.426]{0.014} & \cell[0.433]{0.013} & \bcell[0.401]{0.001} & \bcell[0.411]{0.002} &
\cell[0.599]{0.045} & \cell[0.563]{0.025} & \bcell[0.535]{0.011} & \bcell[0.519]{0.007} &
\cell[1.024]{0.067} & \cell[0.751]{0.025} & \bcell[0.972]{0.065} & \bcell[0.712]{0.030} &
\cell[0.412]{0.007} & \cell[0.431]{0.007} & \bcell[0.398]{0.006} & \bcell[0.415]{0.004} &
\cell[0.947]{0.050} & \cell[0.722]{0.019} & \bcell[0.876]{0.075} & \bcell[0.661]{0.033} \\
\midrule
\multirow{7}{*}{\rotatebox[origin=c]{90}{ILI}}
& 24 &
\cell[3.468]{0.252} & \cell[1.299]{0.058} & \bcell[3.118]{0.036} & \bcell[1.200]{0.019} &
\cell[4.822]{0.081} & \cell[1.489]{0.015} & \bcell[4.679]{0.087} & \bcell[1.459]{0.024} &
\cell[5.356]{0.062} & \cell[1.590]{0.016} & \bcell[4.947]{0.053} & \bcell[1.494]{0.017} &
\cell[7.934]{0.304} & \cell[2.091]{0.031} & \bcell[6.331]{0.231} & \bcell[1.816]{0.029} &
\cell[4.561]{0.065} & \cell[1.535]{0.015} & \bcell[4.093]{0.060} & \bcell[1.378]{0.030} \\
& 36 &
\cell[3.441]{0.139} & \cell[1.273]{0.034} & \bcell[3.310]{0.192} & \bcell[1.240]{0.049} &
\cell[4.831]{0.196} & \bcell[1.479]{0.041} & \bcell[4.763]{0.198} & \cell[1.483]{0.044} &
\cell[5.131]{0.046} & \cell[1.569]{0.006} & \bcell[5.027]{0.081} & \bcell[1.537]{0.022} &
\cell[8.793]{0.428} & \cell[2.214]{0.047} & \bcell[6.560]{0.629} & \bcell[1.848]{0.088} &
\cell[4.376]{0.249} & \cell[1.463]{0.077} & \bcell[4.213]{0.361} & \bcell[1.425]{0.107} \\
& 48 &
\cell[3.086]{0.176} & \cell[1.184]{0.036} & \bcell[2.927]{0.017} & \bcell[1.128]{0.010} &
\cell[4.789]{0.153} & \cell[1.465]{0.036} & \bcell[4.524]{0.132} & \bcell[1.439]{0.029} &
\cell[5.150]{0.037} & \cell[1.564]{0.010} & \bcell[4.920]{0.047} & \bcell[1.514]{0.009} &
\cell[7.968]{0.255} & \cell[2.068]{0.026} & \bcell[6.154]{0.110} & \bcell[1.779]{0.019} &
\cell[4.264]{0.085} & \cell[1.418]{0.022} & \bcell[3.963]{0.221} & \bcell[1.340]{0.062} \\
& 60 &
\cell[2.843]{0.025} & \cell[1.136]{0.010} & \bcell[2.785]{0.021} & \bcell[1.116]{0.003} &
\cell[4.876]{0.027} & \cell[1.495]{0.006} & \bcell[4.573]{0.158} & \bcell[1.465]{0.026} &
\cell[5.407]{0.033} & \cell[1.604]{0.011} & \bcell[5.013]{0.059} & \bcell[1.528]{0.016} &
\cell[7.387]{0.269} & \cell[1.984]{0.036} & \bcell[6.119]{0.186} & \bcell[1.758]{0.031} &
\cell[4.288]{0.105} & \cell[1.401]{0.033} & \bcell[3.970]{0.108} & \bcell[1.335]{0.032} \\
\bottomrule
\end{tabular}
}
\end{center}
\end{sidewaystable}
\begin{sidewaystable}
\begin{center}
\caption{Results of WaveBound in the univariate setting. All results are averaged over 3 trials. Subscripts denote standard deviations.}
\label{table:exp_full_uni}
\scriptsize
\setlength{\tabcolsep}{3pt}
\makebox[\textwidth]{
\begin{tabular}[1.0\linewidth]{l c | cccc | cccc | cccc | cccc | cccc | cccc}
\toprule
\multicolumn{2}{l|}{\multirow{2}{*}{Models}} &
\multicolumn{4}{c|}{Autoformer~\cite{WuXWL21}} &
\multicolumn{4}{c|}{Pyraformer~\cite{liu2022pyraformer}} &
\multicolumn{4}{c|}{Informer~\cite{ZhouZPZLXZ21}} &
\multicolumn{4}{c}{N-BEATS~\cite{OreshkinCCB20}} &
\multicolumn{4}{c}{LSTNet~\cite{LaiCYL18}} &
\multicolumn{4}{c}{TCN~\cite{abs-1803-01271}} \\
&&
\multicolumn{2}{c}{Origin} & \multicolumn{2}{c|}{\textbf{w/ Ours}} &
\multicolumn{2}{c}{Origin} & \multicolumn{2}{c|}{\textbf{w/ Ours}} &
\multicolumn{2}{c}{Origin} & \multicolumn{2}{c|}{\textbf{w/ Ours}} &
\multicolumn{2}{c}{Origin} & \multicolumn{2}{c|}{\textbf{w/ Ours}} &
\multicolumn{2}{c}{Origin} & \multicolumn{2}{c|}{\textbf{w/ Ours}} &
\multicolumn{2}{c}{Origin} & \multicolumn{2}{c}{\textbf{w/ Ours}}  \\
\cmidrule(lr){3-4} \cmidrule(lr){5-6}
\cmidrule(lr){7-8} \cmidrule(lr){9-10}
\cmidrule(lr){11-12} \cmidrule(lr){13-14}
\cmidrule(lr){15-16} \cmidrule(lr){17-18}
\cmidrule(lr){19-20} \cmidrule(lr){21-22}
\cmidrule(lr){23-24} \cmidrule(lr){25-26}
\multicolumn{2}{l|}{Metric} &
MSE & MAE & MSE & MAE &
MSE & MAE & MSE & MAE &
MSE & MAE & MSE & MAE &
MSE & MAE & MSE & MAE &
MSE & MAE & MSE & MAE &
MSE & MAE & MSE & MAE \\
\toprule
\multirow{7}{*}{\rotatebox[origin=c]{90}{ETTm2}}
& 96 &
\cell[0.098]{0.016} & \cell[0.239]{0.020} & \bcell[0.085]{0.004} & \bcell[0.221]{0.005} &
\cell[0.078]{0.007} & \cell[0.209]{0.011} & \bcell[0.070]{0.002} & \bcell[0.197]{0.003} &
\cell[0.085]{0.008} & \cell[0.224]{0.012} & \bcell[0.081]{0.003} & \bcell[0.218]{0.004} &
\cell[0.073]{0.004} & \cell[0.198]{0.006} & \bcell[0.067]{0.001} & \bcell[0.188]{0.002} &
\cell[0.091]{0.008} & \cell[0.228]{0.011} & \bcell[0.067]{0.001} & \bcell[0.188]{0.001} &
\cell[0.082]{0.011} & \cell[0.218]{0.017} & \bcell[0.073]{0.001} & \bcell[0.207]{0.001} \\
& 192 &
\cell[0.130]{0.012} & \cell[0.277]{0.012} & \bcell[0.116]{0.008} & \bcell[0.262]{0.008} &
\cell[0.114]{0.003} & \cell[0.257]{0.004} & \bcell[0.110]{0.004} & \bcell[0.256]{0.004} &
\cell[0.122]{0.007} & \cell[0.273]{0.009} & \bcell[0.118]{0.001} & \bcell[0.270]{0.002} &
\cell[0.107]{0.007} & \cell[0.246]{0.010} & \bcell[0.103]{0.001} & \bcell[0.241]{0.001} &
\cell[0.125]{0.005} & \cell[0.272]{0.006} & \bcell[0.108]{0.001} & \bcell[0.248]{0.002} &
\cell[0.142]{0.016} & \cell[0.295]{0.015} & \bcell[0.113]{0.002} & \bcell[0.265]{0.003} \\
& 336 &
\cell[0.162]{0.005} & \cell[0.311]{0.003} & \bcell[0.143]{0.008} & \bcell[0.293]{0.008} &
\cell[0.178]{0.007} & \cell[0.325]{0.003} & \bcell[0.153]{0.006} & \bcell[0.306]{0.005} &
\cell[0.153]{0.004} & \bcell[0.304]{0.003} & \bcell[0.148]{0.001} & \cell[0.305]{0.002} &
\cell[0.163]{0.028} & \cell[0.310]{0.027} & \bcell[0.135]{0.001} & \bcell[0.284]{0.001} &
\cell[0.171]{0.002} & \cell[0.322]{0.002} & \bcell[0.159]{0.004} & \bcell[0.308]{0.004} &
\cell[0.157]{0.022} & \bcell[0.314]{0.026} & \bcell[0.154]{0.002} & \bcell[0.314]{0.003} \\
& 720 &
\cell[0.194]{0.001} & \cell[0.344]{0.002} & \bcell[0.188]{0.004} & \bcell[0.338]{0.002} &
\cell[0.198]{0.024} & \cell[0.351]{0.016} & \bcell[0.169]{0.005} & \bcell[0.329]{0.005} &
\cell[0.196]{0.005} & \cell[0.351]{0.006} & \bcell[0.189]{0.003} & \bcell[0.349]{0.003} &
\cell[0.263]{0.033} & \cell[0.402]{0.026} & \bcell[0.188]{0.006} & \bcell[0.340]{0.006} &
\cell[0.253]{0.003} & \cell[0.395]{0.003} & \bcell[0.242]{0.003} & \bcell[0.387]{0.002} &
\cell[0.248]{0.038} & \cell[0.404]{0.034} & \bcell[0.193]{0.002} & \bcell[0.360]{0.001} \\
\midrule
\multirow{7}{*}{\rotatebox[origin=c]{90}{Electricity}}
& 96 &
\cell[0.462]{0.023} & \cell[0.502]{0.014} & \bcell[0.447]{0.029} & \bcell[0.496]{0.022} &
\cell[0.240]{0.002} & \cell[0.351]{0.001} & \bcell[0.229]{0.002} & \bcell[0.347]{0.002} &
\cell[0.266]{0.005} & \cell[0.371]{0.003} & \bcell[0.261]{0.006} & \bcell[0.369]{0.005} &
\cell[0.304]{0.001} & \cell[0.382]{0.001} & \bcell[0.298]{0.005} & \bcell[0.378]{0.002} &
\cell[0.452]{0.011} & \cell[0.505]{0.006} & \bcell[0.340]{0.022} & \bcell[0.419]{0.013} &
\cell[0.235]{0.003} & \cell[0.357]{0.001} & \bcell[0.226]{0.005} & \bcell[0.346]{0.004} \\
& 192 &
\cell[0.557]{0.016} & \cell[0.565]{0.013} & \bcell[0.515]{0.031} & \bcell[0.538]{0.017} &
\cell[0.262]{0.005} & \cell[0.367]{0.001} & \bcell[0.253]{0.001} & \bcell[0.365]{0.001} &
\cell[0.283]{0.003} & \cell[0.385]{0.003} & \bcell[0.281]{0.006} & \bcell[0.383]{0.003} &
\cell[0.323]{0.002} & \cell[0.396]{0.001} & \bcell[0.322]{0.004} & \bcell[0.395]{0.002} &
\cell[0.419]{0.007} & \cell[0.477]{0.003} & \bcell[0.327]{0.005} & \bcell[0.402]{0.001} &
\cell[0.284]{0.008} & \cell[0.389]{0.007} & \bcell[0.272]{0.004} & \bcell[0.382]{0.003} \\
& 336 &
\cell[0.613]{0.026} & \cell[0.593]{0.015} & \bcell[0.531]{0.044} & \bcell[0.543]{0.028} &
\cell[0.285]{0.002} & \bcell[0.386]{0.002} & \bcell[0.283]{0.002} & \bcell[0.386]{0.002} &
\cell[0.338]{0.005} & \cell[0.428]{0.003} & \bcell[0.332]{0.005} & \bcell[0.426]{0.004} &
\cell[0.385]{0.004} & \cell[0.430]{0.001} & \bcell[0.369]{0.001} & \bcell[0.422]{0.000} &
\cell[0.439]{0.011} & \cell[0.488]{0.006} & \bcell[0.352]{0.003} & \bcell[0.420]{0.001} &
\cell[0.378]{0.015} & \cell[0.451]{0.009} & \bcell[0.327]{0.012} & \bcell[0.419]{0.007} \\
& 720 &
\cell[0.691]{0.045} & \cell[0.632]{0.019} & \bcell[0.604]{0.009} & \bcell[0.591]{0.004} &
\cell[0.309]{0.002} & \bcell[0.411]{0.002} & \bcell[0.307]{0.016} & \cell[0.415]{0.016} &
\cell[0.631]{0.008} & \cell[0.612]{0.004} & \bcell[0.378]{0.005} & \bcell[0.463]{0.004} &
\cell[0.462]{0.009} & \cell[0.487]{0.004} & \bcell[0.433]{0.003} & \bcell[0.473]{0.001} &
\cell[0.492]{0.004} & \cell[0.528]{0.003} & \bcell[0.428]{0.032} & \bcell[0.478]{0.016} &
\cell[0.435]{0.044} & \cell[0.490]{0.026} & \bcell[0.399]{0.060} & \bcell[0.471]{0.035} \\
\bottomrule
\end{tabular}
}
\end{center}
\end{sidewaystable}

\end{document}